\newcommand{\mailtodomain}[1]{\href{mailto:#1@monash.edu}{\texttt{#1}}}
\title{Anytime Approximate Formal Feature Attribution}
\author {
  Jinqiang Yu,
  Graham Farr,
  Alexey Ignatiev,
  Peter J. Stuckey
}
\small\texttt{\{}\mailtodomain{jinqiang.yu}\texttt{,}\mailtodomain{graham.farr}\texttt{,}\mailtodomain{alexey.ignatiev}\texttt{,}\mailtodomain{peter.stuckey}\texttt{\}}\texttt{@monash.edu}}
\newtheorem{theorem}{Theorem}
\newtheorem{corollary}{Corollary}
\newtheorem*{summary*}{Summary}
\declaretheoremstyle[
headpunct={},
headfont=\bfseries,
notefont=\bfseries,
bodyfont=\normalfont,
headformat=\NAME~\NUMBER:\NOTE.
]{def}
\newcommand{\fml}[1]{{\mathcal{#1}}}
\newcommand{\mbf}[1]{\ensuremath\mathbf{#1}}
\newcommand{\mbb}[1]{\ensuremath\mathbb{#1}}
\newcommand{\ignore}[1]{}
\newcommand{\hs}{\textrm{HS}}
\newcommand{\mins}{\textrm{mins}}
\newcommand{\mhs}{\textrm{MHS}}
\newcommand{\ffa}{\textrm{ffa}}
\newcommand{\exps}{\ensuremath\mbb{E}}
\newcommand{\axps}{\ensuremath\mbb{A}}
\newcommand{\cxps}{\ensuremath\mbb{C}}
\newcommand{\maxp}{\textrm{MARCO-A}\xspace}
\newcommand{\mcxp}{\textrm{MARCO-C}\xspace}
\newcommand{\mswitch}{\textrm{MARCO-S}\xspace}
\DeclareMathOperator*{\limply}{\rightarrow}
\def\squareforqed{\hbox{\rlap{$\sqcap$}$\sqcup$}}
\def\qed{\ifmmode\squareforqed\else{\unskip\nobreak\hfil
\penalty50\hskip1em\null\nobreak\hfil\squareforqed
\parfillskip=0pt\finalhyphendemerits=0\endgraf}\fi}
\definecolor{tyellow1}{HTML}{FCE94F}
\definecolor{tyellow2}{HTML}{EDD400}
\definecolor{tyellow3}{HTML}{C4A000}
\definecolor{torange1}{HTML}{FCAF3E}
\definecolor{torange2}{HTML}{F57900}
\definecolor{torange3}{HTML}{C35C00}
\definecolor{tbrown1}{HTML}{E9B96E}
\definecolor{tbrown2}{HTML}{C17D11}
\definecolor{tbrown3}{HTML}{8F5902}
\definecolor{tgreen1}{HTML}{8AE234}
\definecolor{tgreen2}{HTML}{73D216}
\definecolor{tgreen3}{HTML}{4E9A06}
\definecolor{tblue1}{HTML}{729FCF}
\definecolor{tblue2}{HTML}{3465A4}
\definecolor{tblue3}{HTML}{204A87}
\definecolor{tpurple1}{HTML}{AD7FA8}
\definecolor{tpurple2}{HTML}{75507B}
\definecolor{tpurple3}{HTML}{5C3566}
\definecolor{tred1}{HTML}{EF2929}
\definecolor{tred2}{HTML}{CC0000}
\definecolor{tred3}{HTML}{A40000}
\definecolor{tlgray1}{HTML}{EEEEEC}
\definecolor{tlgray2}{HTML}{D3D7CF}
\definecolor{tlgray3}{HTML}{BABDB6}
\definecolor{tdgray1}{HTML}{888A85}
\definecolor{tdgray2}{HTML}{555753}
\definecolor{tdgray3}{HTML}{2E3436}
\begin{document}

\maketitle

\begin{abstract}\label{sec:abs}
    Widespread use of artificial intelligence (AI) algorithms and machine learning (ML) models on the one hand and a number of crucial issues pertaining to them warrant the need for explainable artificial intelligence (XAI).
    A key explainability question is: given this decision was made, what are the input features which contributed to the decision?
    Although a range of XAI approaches exist to tackle this problem, most of them have significant limitations.
    Heuristic XAI approaches suffer from the lack of quality guarantees, and often try to approximate Shapley values, which is not the same as explaining which features contribute to a decision. 
    A recent alternative is so-called formal feature attribution (FFA), which defines feature importance as the fraction of formal abductive explanations (AXp's) containing the given feature. This measures feature importance from the view of formally reasoning about the model's behavior.
    %
    It is challenging to compute FFA using its definition because that involves counting AXp's, although one can approximate it.
    Based on these results, this paper makes several contributions.
    First, it gives compelling evidence that computing FFA is intractable, even if the set of contrastive formal explanations (CXp's) is provided, by proving that the problem is \#P-hard.
    Second, by using the duality between AXp's and CXp's, it proposes an efficient heuristic to switch from CXp enumeration to AXp enumeration on-the-fly resulting in an adaptive explanation enumeration algorithm effectively approximating FFA in an anytime fashion.
    Finally, experimental results obtained on a range of widely used datasets demonstrate the effectiveness of the proposed FFA approximation approach in terms of the error of FFA approximation as well as the number of explanations computed and their diversity given a fixed time limit.
\end{abstract}

\section{Introduction} \label{sec:intro}


The rise of the use of artificial intelligence (AI) and machine learning (ML) methods to help interpret data and make decisions has exposed a keen need for these algorithms to be able to explain their decisions/judgements.
Lack of explanation of opaque and complex models leads to lack of trust, and allows the models to encapsulate unfairness, discrimination and other unwanted properties learnt from the data or through training.

For a classification problem a key explainability question is: ``given a decision was made (a class was imputed to some data instance), what are the features that contributed to the decision?''.
A more complex question is: ``given the decision was made, how important was each feature in making that decision?''.
There are many heuristic approaches to answering this question, mostly based on sampling around the instance~\cite{guestrin-kdd16}, and attempting to approximate Shapley values~\cite{lundberg-nips17}.
But there is strong evidence that Shapley values do not really compute the importance of a feature to a decision~\cite{huang-corr23,huang-corr23b}.

Formal approaches to explainability are able compute formal \emph{abductive explanations} (AXp's) for a decision, that is a minimal set of features which are enough to ensure the same decision will be made~\cite{darwiche-ijcai18,inms-aaai19,msi-aaai22}.
They can also compute formal \emph{contrastive explanations} (CXp's), that is a minimal set of features, one of which must change in order to change the decision~\cite{miller-aij19,inams-aiia20}.
A wealth of algorithms originating from the area of dealing with over-constrained systems~\cite{bs-dapl05,liffiton-jar08,blms-aicom12,mshjpb-ijcai13,lpmms-cj16} can be applied for the computation and enumeration of AXp's and CXp's~\cite{msi-aaai22}.
Here, enumeration of formal explanations builds on the use of the minimal hitting set duality between AXp's and CXp's~\cite{reiter-aij87,lpmms-cj16,inams-aiia20}.
Until recently there was no formal approach to ascribing importance to features.

A recent and attractive approach to formal feature attribution, called FFA~\cite{yis-corr23}, is simple. Compute all the abductive explanations for a decision, then the importance of a feature for the decision is simply the proportion of abductive explanations in which it appears.
FFA is crisply defined, and easy to understand, but it is challenging to compute, as deciding if a feature has a non-zero attribution
is at least as hard as deciding feature relevancy~\cite{huang-tacas23,yis-corr23}.


Yu \emph{et al.}~\cite{yis-corr23} show that FFA can be efficiently computed by making use of the hitting set duality between AXp's and CXp's.
By trying to enumerate CXp's, a side effect of the algorithm is to discover AXp's. In fact, the algorithm will usually find many AXp's before finding the first CXp. The AXp's are guaranteed to be diverse, since they need to be broad in scope to ensure that the CXp is large enough to hit all AXp's that apply to the decision.

Using AXp's collected as a side effect of CXp enumeration is effective at the start of the enumeration. But as we find more and more AXp's as side effects we eventually get to a point where many more CXp's are generated than AXp's.  Experimentation shows that if we wish to enumerate all AXp's then indeed we should not rely on the side effect behavior, but simply enumerate AXp's directly.
This leads to a quandary: to get fast accurate approximations of FFA we wish to enumerate CXp's and generate AXp's as a side effect.  But to compute the final correct FFA we wish to compute all AXp's, and we are better off directly enumerating AXp's.

In this paper, we develop an \emph{anytime} approach to computing approximate FFA, by starting with CXp enumeration, and then dynamically switching to AXp enumeration when the rate of AXp discovery by CXp enumeration drops. In doing so, we are able to quickly get accurate approximations, but also arrive to the full set of AXp's quicker than pure CXp enumeration.
As direct CXp enumeration is feasible to do without the need to resort to the hitting set duality~\cite{inams-aiia20,msi-aaai22}, one may want to estimate FFA by first enumerating CXp's. The second contribution of this paper is to investigate this alternative approach and to show that even if a(n) (in)complete set of CXp's is given, determining FFA is computationally expensive being \#P-hard even if all CXp's are of size two.

\section{Preliminaries} \label{sec:prelim}


Here we introduce the notation and background on formal XAI in order to define formal feature attribution (FFA).

\subsection{Classification Problems}

We assume classification problems classify data instances into classes $\fml{K}$ where $|\fml{K}| = k \geq 2$.
We are given a set of $m$ features $\fml{F}$, where the value of feature $i \in \fml{F}$ comes from a domain $\mbb{D}_i$, which may be Boolean, (bounded) integer or (bounded) real.
The \emph{complete feature space} is defined by $\mathbb{F}\triangleq\prod_{i=1}^{m}\mbb{D}_i$.

A \emph{data point} in feature space is denoted $\mbf{v} = (v_1, \ldots, v_m)$ where $v_i \in \mbb{D}_i, 1 \leq i \leq m$.
An \emph{instance} of the classification problem is a pair of feature vector and its corresponding class, i.e.
$(\mbf{v}, c)$, where $\mbf{v}\in\mbb{F}$ and $c\in \fml{K}$.

We use the notation $\mbf{x} = (x_1, \ldots, x_m)$ to represent an arbitrary point in feature space, where each $x_i$ will take a value from $\mbb{D}_i$.

A \emph{classifier} is a total function from feature space to class: $\kappa: \mathbb{F} \rightarrow \fml{K}$.
Many approaches exist to define classifiers including decision sets~\cite{clark-ewsl91,lakkaraju-kdd16}, decision lists~\cite{rivest-ml87}, decision trees~\cite{rivest-ipl76}, random forests~\cite{friedman-tas01}, boosted trees~\cite{guestrin-kdd16a}, and neural nets~\cite{hinton-icml10,hcseyb-neurips16}.

\subsection{Formal Explainability}

Given a data point $\mbf{v}$, classifier $\kappa$ classifies it as class $\kappa(\mbf{v})$.
A \emph{post hoc explanation} of the behavior of $\kappa$
on data point $\mbf{v}$ tries to explain the behavior of $\kappa$ on this instance.  We consider two forms of formal explanation answering \emph{why} and \emph{why not} (or \emph{how}) questions.

An \emph{abductive explanation} (AXp) is a minimal set of features $\fml{X}$ such that any data point sharing the same feature values with $\mbf{v}$ on these features is guaranteed to be assigned the same class by $c = \kappa(\mbf{v})$~\cite{darwiche-ijcai18,inms-aaai19}.
Formally, $\fml{X}$ is a subset-minimal set of features such that:

\begin{equation} \label{eq:axp}
  \forall(\mbf{x} \in \mbb{F}). \left[\bigwedge\nolimits_{i \in \fml{X}}
  (x_i = v_i)\right] \limply (\kappa(\mbf{x}) = c)
\end{equation}

A dual concept of \emph{contrastive explanations} (CXp's) helps us understand \emph{how} to reach another prediction~\cite{miller-aij19,inams-aiia20,msi-aaai22}.
A \emph{contrastive explanation} (CXp) for the classification of data point $\mbf{v}$ as class $c = \kappa(\mbf{v})$ is a minimal set of features such that at least one must change in order that $\kappa$ will return a different class.
Formally, a CXp is a subset minimal set of features $\fml{Y}$
such that
\begin{equation} \label{eq:cxp}
  \exists(\mbf{x}\in\mbb{F}).\left[\bigwedge\nolimits_{i\not\in\fml{Y}}(x_i=v_i)\right]\land(\kappa(\mbf{x})\not=c)
\end{equation}

Interestingly, the set $\axps$ of all AXp's $\fml{X}$ explaining classification $\kappa(\mbf{v}) = c$ and the set $\cxps$ of
all CXp's $\fml{Y}$ explaining the same classification enjoy a \emph{minimal hitting set duality}~\cite{inams-aiia20}.
That is $\axps = \mhs(\cxps)$ and is $\cxps = \mhs(\axps)$
where $\mhs(S)$ returns the minimal hitting sets of $S$, that is the minimal sets that share an element with each subset in $S$.
More formally, $\hs(S) = \{ t \subseteq (\cup S) ~|~ \forall s \in S,~ t \cap s \neq \emptyset\})$ and $\mins(S) = \{ s \in S ~|~ \forall t \subsetneq s,~t \not\in S\}$ returns the subset minimal elements of a set of sets, and $\mhs(S) = \mins(\hs(S))$.
This property can be made use of in computing or enumerating AXp's and/or CXp's.

A growing body of recent work on formal explanations is represented
(but not limited)
by~\cite{msgcin-nips20,msgcin-icml21,ims-ijcai21,ims-sat21,barcelo-nips21,kutyniok-jair21,darwiche-jair21,kwiatkowska-ijcai21,mazure-cikm21,tan-nips21,iims-jair22,rubin-aaai22,iisms-aaai22,hiicams-aaai22,msi-aaai22,an-ijcai22,leite-kr22,jpms-rw22,barcelo-nips22,jpms-corr22,yisnms-aaai23,jpms-aaai23,jpms-aij23,msi-fai23,ihincms-ajar23,izza-corr23,jpms-corr23,huang-corr23b,yis-cp23,jpms-ecai23,jpms-aij23,izza-corr23b}.

\subsection{Formal Feature Attribution}

Given the definition of AXp's above, we can now illustrate the \emph{formal feature attribution} (FFA) function by Yu \emph{et al}~\cite{yis-corr23}.\footnote{Measuring feature importance from the perspective of formal explainability was independently studied in~\cite{izza-corr23b}.}
Denoted as $\ffa_\kappa(i, (\mbf{v}, c))$, it returns for a classification $\kappa(\mbf{v}) = c$ how important feature $i \in \fml{F}$ is in making this classification, defined as the proportion of AXp's for the classification $\axps_\kappa(\mbf{v}, c)$, which include feature $i$, i.e.
\begin{equation} \label{eq:ffa}
    \ffa_\kappa(i,(\mbf{v}, c)) = \frac{|\{ \fml{X} ~|~ \fml{X} \in
      \axps_\kappa(\mbf{v}, c), i \in \fml{X}) |}{|
    \axps_\kappa(\mbf{v}, c)|}
\end{equation}

\subsection{Computing FFA} \label{sec:ffa}

Yu \emph{et al}~\cite{yis-corr23} define an anytime algorithm for computing FFA
shown in \Cref{alg:enum}.
The algorithm collects AXp's $\mathbb{A}$ and CXp's $\mathbb{C}$.
They are initialized to empty.
While we still have resources, we generate a minimal hitting set $\fml{Y} \in \mhs(\mathbb{A})$ of
the current known AXp's $\mathbb{A}$ and not already in $\mathbb{C}$ with the call
$\textsc{MinimalHS}(\mathbb{A},\mathbb{C})$.
If no (new) hitting set exists then we are finished and exit the loop.
We then check if \eqref{eq:cxp} holds in which case we add the candidate to the set of CXp's $\mathbb{C}$.
Otherwise, we know that $\fml{F} \setminus \fml{Y}$ is a correct (non-minimal) abductive explanation, i.e. it satisfies \eqref{eq:axp}.
We use the call $\textsc{ExtractAXp}$ to minimize the resulting explanation, returning an AXp $\fml{X}$ which is added to the collection of AXp's $\mathbb{A}$.
$\textsc{ExtractAXp}$
tries to remove features $i$ from $\fml{F} \setminus \fml{Y}$ one by one while still satisfying \eqref{eq:axp}.
When resources are exhausted, the loop exits and we return the set of AXp's and CXp's currently discovered.

\newcommand{\Break}{\textbf{break}}
\algnewcommand{\IfThen}[2]{
  \State \algorithmicif\ #1\ \algorithmicthen\ #2}
\algnewcommand{\IfThenElse}[3]{
  \State \algorithmicif\ #1\ \algorithmicthen\ #2\ \algorithmicelse\ #3}

\begin{algorithm}[t]
  \begin{algorithmic}[1]
    \Procedure{XpEnum}{$\kappa$, $\mbf{v}$, $c$}
      \State $\left(\axps, \cxps\right)\gets (\emptyset, \emptyset)$ \label{ln:init}
      \While{resources available}
        \State $\fml{Y}\gets\Call{MinimalHS}{\axps, \cxps}$ \label{ln:mhs}
        \IfThen{$\fml{Y}=\bot$}{\Break} \label{ln:nocand}
        \If{$\exists(\mbf{x}\in\mbb{F}).\bigwedge\nolimits_{i\not\in\fml{Y}}(x_i=v_i)\land(\kappa(\mbf{x})\not=c)$} \label{ln:check}
          \State $\cxps\gets\cxps\cup\{\fml{Y}\}$ \label{ln:cxprec}
        \Else
          \State $\fml{X}\gets\Call{ExtractAXp}{\fml{F}\setminus\fml{Y},\kappa,\mbf{v},c}$ \label{ln:extract}
          \State $\axps\gets\axps\cup\{\fml{X}\}$ \label{ln:axprec}
        \EndIf
      \EndWhile
      \Return{$\axps$, $\cxps$}
    \EndProcedure
  \end{algorithmic}
  \caption{Anytime Explanation Enumeration}
  \label{alg:enum}
\end{algorithm}

\subsection{Graph-Related Notation}

The paper uses some (undirected) graph-theoretic concepts.
A graph is defined as a tuple, $G=(V,E)$, where $V$ is a finite set of vertices and $E$ is a finite set
of unordered pairs of vertices.
%
For simplicity, $uv$ denotes an edge $\{u,v\}$ of $E$.
Given a graph $G=(V,E)$, a \emph{vertex cover} $X\subseteq V$ is such that for each $uv\in E$, $\{u,v\}\cap X\not=\emptyset$.
A \emph{minimal} vertex cover is a vertex cover that is minimal wrt. set inclusion.
%

\subsection{The Complexity of Counting}

The class \#P consists of functions that count accepting computations
of polynomial-time nondeterministic Turing machines \cite{valiant-tcs79}.
A problem is \textit{\#P-hard} if every problem in \#P is polynomial-time Turing reducible to it; if it also belongs
to \#P then it is \textit{\#P-complete}.


\#P-hardness is usually regarded as stronger evidence of intractability than NP-hardness or indeed hardness for any level of the Polynomial Hierarchy.

\section{Approximate Formal Feature Attribution} \label{sec:approx}

Facing the need to compute (exact or approximate) FFA values, one may think of a possibility to first enumerate CXp's and then apply the minimal hitting set duality between AXp's and CXp's to determine FFA, without explicitly computing $\axps = \mhs(\cxps)$.
This looks plausible given that CXp enumeration can be done directly, without the need to enumerate AXp's~\cite{inams-aiia20}.
However, as \Cref{sec:dual} argues, computing FFA given a set of CXp's turns out to be computationally difficult, 
being (roughly) at least as hard as counting the minimal hitting sets $\mhs(\cxps)$.
Hence, \Cref{sec:switch} approaches the problem from a different angle by efficient exploitation of the eMUS- or MARCO-like setup~\cite{pms-aaai13,liffiton-cpaior13,lpmms-cj16,inams-aiia20} and making the algorithm \emph{switch} from CXp enumeration to AXp enumeration on the fly. 

\subsection{Duality-Based Approximation is Hard} \label{sec:dual}


We show that determining $\hbox{ffa}_{\kappa}(i,(\mathbf{v},c))$ from $\mathbb{C}$ is \#P-hard even when all
CXp's have size two.  In that special case, the CXp's may be treated as the edges of a graph,
which we denote by $G(\mathcal{F},\kappa,\mathbf{v},c)$, with vertex set $\mathcal{F}$.
The minimal hitting
set duality between the CXp's and AXp's then implies that the AXp's $\fml{X} \in \mhs(\cxps)$ are precisely the minimal
vertex covers of $G(\mathcal{F},\kappa,\mathbf{v},c)$.
It is known that determining the
number of minimal vertex covers in a graph is \#P-complete (even for bipartite graphs);
this is implicit in~\cite{provan-sicomp83}, as noted for example in~\cite[p.~400]{vadhan-sicomp01}.

When all CXp's have size 2, the formal feature attribution $\hbox{ffa}_{\kappa}(i,(\mathbf{v},c))$
is just the proportion of minimal vertex covers of $G(\mathcal{F},\kappa,\mathbf{v},c)$
that contain the vertex $i$, i.e. the vertex of $G(\mathcal{F},\kappa,\mathbf{v},c)$ that
represents the feature $i\in\mathcal{F}$. To help express this in graph-theoretic language,
write $\hbox{\#mvc}(G)$ for the number of minimal vertex covers of $G$.  Write 
$\hbox{\#mvc}(G, v)$ and $\hbox{\#mvc}(G, \neg v)$ for the numbers of minimal vertex covers
of $G$ that \emph{do} and \emph{do not} contain vertex $v\in V(G)$, respectively.  Define
\begin{equation}
\label{eq:ffaG}
\hbox{ffa}(G, v) := \frac{\hbox{\#mvc}(G, v)}{\hbox{\#mvc}(G)} .
\end{equation}
Then
\[
\hbox{ffa}_{\kappa}(i,(\mathbf{v},c)) = \hbox{ffa}(G(\mathcal{F},\kappa,\mathbf{v},c), i) .
\]

Observe that
$\hbox{\#mvc}(G) = \hbox{\#mvc}(G, v)+\hbox{\#mvc}(G, \neg v)$.
Then we may rewrite \eqref{eq:ffaG} as
\begin{equation}
\label{eq:ffaG2}
\hbox{ffa}(G, v) = \frac{\hbox{\#mvc}(G, v)}{\hbox{\#mvc}(G, v)+\hbox{\#mvc}(G, \neg v)}.
\end{equation}

\begin{theorem}
    Determining $\hbox{\rm ffa}(G, v)$ is \#P-hard.
\end{theorem}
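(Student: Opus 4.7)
My plan is to exhibit a polynomial-time Turing reduction from the problem \#MVC of counting minimal vertex covers of a graph, which is \#P-hard by the result cited just above, to the problem of computing $\hbox{ffa}(G,v)$. The idea is to construct from a given graph $G$ a slightly enlarged graph $G^*$ and a designated vertex $v^*$ so that $\hbox{\#mvc}(G)$ can be read off algebraically from a single oracle value $\hbox{ffa}(G^*, v^*)$.

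Concretely, let $G^*$ be obtained from $G$ by adding one new vertex $v^*$ together with an edge from $v^*$ to every vertex of $V(G)$. I would count $\hbox{\#mvc}(G^*)$ by splitting into two cases according to whether $v^*$ lies in the cover. Case $v^* \in C$: I would argue that $C \mapsto C \setminus \{v^*\}$ is a bijection onto the MVCs of $G$. The star edges $v^*u$ are automatically covered by $v^*$; a vertex $u \in C \setminus \{v^*\}$ is minimality-necessary in $G^*$ iff it has a $G$-neighbour outside the cover, which is exactly minimality in $G$; and $v^*$ itself is necessary whenever some $u \in V(G)$ is missing from $C$, which holds for every MVC of $G$ since any MVC of $G$ is a proper subset of $V(G)$ whenever $G$ has at least one edge. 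Case $v^* \notin C$: each edge $v^*u$ forces $u \in C$, so $C = V(G)$, and this single set is indeed a minimal vertex cover of $G^*$ because removing any $u$ would uncover $v^*u$.

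Combining the two cases gives $\hbox{\#mvc}(G^*) = \hbox{\#mvc}(G) + 1$ and $\hbox{\#mvc}(G^*, v^*) = \hbox{\#mvc}(G)$, hence
\[
\hbox{ffa}(G^*, v^*) \;=\; \frac{\hbox{\#mvc}(G)}{\hbox{\#mvc}(G) + 1}.
\]
Writing $\alpha := \hbox{ffa}(G^*, v^*)$ and solving yields $\hbox{\#mvc}(G) = \alpha/(1-\alpha)$; since $\hbox{\#mvc}(G) \ge 1$ we have $\alpha < 1$, so the value is well-defined, and the construction and final arithmetic are clearly polynomial. The edgeless case $\hbox{\#mvc}(G)=1$ is trivial and can be disposed of directly, so the assumption that $G$ has at least one edge is harmless. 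The step I expect to require the most care is the $v^* \in C$ case, where I must check that isolated vertices of $G$ cannot belong to any such cover (their only $G^*$-incident edge $v^*u$ is already covered by $v^*$, so including them would violate minimality); with this in hand the bijection with MVCs of $G$ is clean, and the whole reduction goes through. Since all CXp's in the constructed instance correspond to edges of a graph, the hardness moreover holds in the restricted size-two setting claimed in the theorem.
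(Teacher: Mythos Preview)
Your proposal is correct and in fact gives a cleaner reduction than the paper's own proof. Both are polynomial-time Turing reductions from \#MVC, but the paper uses \emph{two} oracle calls: first $\hbox{ffa}(G,v)$ at a non-isolated vertex $v$ of $G$ to obtain the ratio $x/y$ where $x=\hbox{\#mvc}(G,\neg v)$ and $y=\hbox{\#mvc}(G,v)$, and then $\hbox{ffa}(G_v^{[2]},v_1)$ on a larger gadget (two disjoint copies of $G$ with the copy $v_1$ of $v$ joined to every vertex of the second copy) to obtain a second equation, from which $x+y=\hbox{\#mvc}(G)$ is solved algebraically. Your construction instead adjoins a single fresh universal vertex $v^*$ and needs only \emph{one} oracle call, because $\hbox{\#mvc}(G^*,\neg v^*)$ is forced to equal $1$ (the unique such cover being $V(G)$), so the denominator of $\hbox{ffa}(G^*,v^*)$ is already pinned down. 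The paper has to work harder precisely because it queries ffa at an existing vertex of $G$, where neither $\hbox{\#mvc}(G,v)$ nor $\hbox{\#mvc}(G,\neg v)$ is known a priori; your fresh universal vertex sidesteps this entirely.

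The bijection you outline for the $v^*\in C$ case is sound. The two delicate points are exactly the ones you flag: isolated vertices of $G$ cannot appear in $C\setminus\{v^*\}$ (their sole $G^*$-edge is already covered by $v^*$), and $v^*$ itself is necessary in $C$ iff $C\setminus\{v^*\}\subsetneq V(G)$, which holds for every minimal vertex cover of a non-empty $G$ since $V(G)$ is never minimal (removing any vertex leaves all its incident edges covered by their other endpoints). With those checks the map $C\mapsto C\setminus\{v^*\}$ is a bijection onto the MVCs of $G$, and the formula $\hbox{\#mvc}(G)=\alpha/(1-\alpha)$ follows. The edgeless and empty-graph cases are trivial, as you say.
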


\begin{proof}
We give a polynomial-time Turing reduction from the \#P-complete problem of counting minimal
vertex covers to the problem of determining ffa for a node in a graph.

Suppose we have an oracle that, when given a graph and a vertex, returns the ffa of the vertex in one time-step.

Let $G$ be a graph for which we want to count the minimal vertex covers.
Let $v$ be a non-isolated vertex of $G$.
(If none exists, the problem is trivial.)  Put
\begin{eqnarray*}
x  & = & \hbox{\#mvc}(G, \neg v),   \\
y  & = & \hbox{\#mvc}(G,v),
\end{eqnarray*}
so that $\hbox{\#mvc}(G)=x+y$.  It is routine to show
that $x,y>0$.  Initially, $x$ and $y$ are unknown.
Our reduction will use an ffa-oracle to gain enough information
to determine $x$ and $y$.  We will then obtain $\hbox{\#mvc}(G)=x+y$.

First, query the ffa-oracle with $G$ and vertex $v$.  It returns
\[
p := \frac{y}{x+y},
\]
by \eqref{eq:ffaG2}.  We can then recover the ratio $x/y = p^{-1}-1$.

Then we construct a new graph $G_v^{[2]}$ from $G$ as follows.
Take two disjoint copies $G_1$ and $G_2$ of $G$.  Let $v_1$ be the copy of vertex $v$ in $G_1$.
For every $w\in V(G_2)$, add an edge $v_1w$ between $v_1$ and $w$.  We query the ffa-oracle with $G_v^{[2]}$ and vertex $v_1$.  Let $q=\hbox{ffa}(G_v^{[2]}, v_1)$ be the value it returns.

If a minimal vertex cover $X$ of $G_v^{[2]}$ contains $v_1$ then
all the edges from $v_1$ to $G_2$ are covered.
The restriction of $X$ to $G_1$ must be a minimal vertex cover of $G_1$ that contains $v_1$,
and the number of these is just $\hbox{\#mvc}(G,v)=y$.  The restriction of $X$ to $G_2$ must
just be a vertex cover of $G_2$, without any further restriction, and the number of these is
just $\hbox{\#mvc}(G)=x+y$.  These two restrictions of $X$ can be chosen independently to give
all possibilities for $X$.  So
\[
\hbox{\#mvc}(G_v^{[2]},v_1)=y(x+y).
\]

If a minimal vertex cover $X$ of $G_v^{[2]}$ does not contain $v_1$ then the edges
$v_1w$, $w\in V(G_2)$, are not covered by $v_1$.  So each $w\in V(G_2)$ must be in $X$,
which serves to cover not only those edges but also all edges in $G_2$.  The restriction of $X$
to $G_1$ must just be a vertex cover of $G_1$ that does not contain $v_1$, and there
are $\hbox{\#mvc}(G,\neg v)=x$ of these.  Again, the two
restrictions of $X$ are independent.  So
\[
\hbox{\#mvc}(G_v^{[2]}, \neg v_1)=x.
\]

Therefore
\[
q = \frac{y(x+y)}{x + y(x+y)},
\]
by \eqref{eq:ffaG2} (applied this time to $G_v^{[2]}$),
so
\[
x+y = \frac{x/y}{q^{-1}-1} = \frac{p^{-1}-1}{q^{-1}-1}.
\]
We can therefore compute $x+y$ from the values $p$ and $q$ returned by our two oracle calls.
We therefore obtain $\hbox{\#mvc}(G)$.  The entire computation is polynomial time.
\end{proof}

\begin{corollary}
    Determining $\hbox{\rm ffa}_{\kappa}(i,(\mathbf{v},c))$ from the set of CXp's is \#P-hard, even
    if all CXp's have size 2.
\end{corollary}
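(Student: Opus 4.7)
The plan is to reduce from the graph problem of computing $\hbox{ffa}(G,v)$, which the preceding theorem just established to be \#P-hard, to the problem of computing $\hbox{ffa}_{\kappa}(i,(\mbf{v},c))$ when the set of CXp's is supplied as input. For every graph $G$ and non-isolated vertex $v\in V(G)$ I need to exhibit a polynomial-time-constructible classification instance whose CXp's coincide exactly with $E(G)$ (viewed as 2-element subsets of the feature set), and in which vertex $v$ corresponds to some chosen feature $i$.

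The construction: let $\fml{F}:=V(G)$ with Boolean domains $\mbb{D}_u=\{0,1\}$, let $\fml{K}:=\{0,1\}$, let the instance be $\mbf{v}:=\mbf{0}$ with class $c:=0$, and define
\[
\kappa(\mbf{x})=1 \iff \exists\, uw\in E(G):\; x_u=x_w=1,
\]
and $\kappa(\mbf{x})=0$ otherwise. This $\kappa$ has a DNF of size $|E(G)|$ (one term $x_u\wedge x_w$ per edge), is evaluable in polynomial time, and satisfies $\kappa(\mbf{v})=0=c$, so the instance is well-posed.

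The verification then shows that the set of CXp's equals $E(G)$. By definition, $\fml{Y}\subseteq\fml{F}$ is a CXp iff it is subset-minimal among sets for which some $\mbf{x}$ that agrees with $\mbf{v}$ off $\fml{Y}$ gives $\kappa(\mbf{x})\neq c$, which under this $\kappa$ is equivalent to $\fml{Y}$ containing both endpoints of some edge of $G$. A singleton fails; a pair $\{u,w\}$ qualifies precisely when $uw\in E(G)$ (witnessed by flipping both to $1$); a set of size $\geq 3$ that contains an edge is not minimal, because removing any feature outside that edge preserves the same witness. Hence the CXp's are exactly the edges of $G$, each of size 2.

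To conclude: by the minimal hitting set duality $\axps=\mhs(\cxps)$, the AXp's of this instance are the minimal hitting sets of $E(G)$, which are the minimal vertex covers of $G$. Therefore $\hbox{ffa}_{\kappa}(i,(\mbf{v},c))=\hbox{ffa}(G,v)$ when feature $i$ corresponds to vertex $v$, so any oracle that computes FFA from a given set of CXp's also computes $\hbox{ffa}(G,v)$, and \#P-hardness transfers from the theorem. The main obstacle I anticipate is the minimality half of the CXp characterisation: one must carefully rule out non-minimal larger witnesses, but because the same flipped assignment on the two endpoints of an edge remains a witness after adding any extra feature to $\fml{Y}$, no strict superset of an edge can be minimal, which settles the point.
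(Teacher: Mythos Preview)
Your argument is correct and follows the paper's intended route: the paper states the corollary without a separate proof, treating it as immediate from the theorem together with the identification $\hbox{ffa}_{\kappa}(i,(\mbf{v},c))=\hbox{ffa}(G(\fml{F},\kappa,\mbf{v},c),i)$ established just before the theorem. Your explicit Boolean classifier realising an arbitrary graph as its CXp set is a useful addition that closes a point the paper leaves implicit, namely that every graph actually arises as the CXp graph of some instance.
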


\subsection{Switching from CXp to AXp Enumeration} \label{sec:switch}

As discussed in \Cref{sec:prelim},~\cite{yis-corr23} proposed to apply implicit hitting set enumeration for approximating FFA thanks to the duality between AXp's and CXp's.
The approach builds on the use of the MARCO algorithm~\cite{pms-aaai13,liffiton-cpaior13,lpmms-cj16} in the anytime fashion, i.e. collects the sets of AXp's and CXp's and stops upon reaching a given resource limit.
As MARCO can be set to target enumerating either AXp's or CXp's depending on user's preferences, the dual explanations will be collected by the algorithm as a \emph{side effect}.
Quite surprisingly, the findings of~\cite{yis-corr23} show that for the purposes of \emph{practical} FFA approximation it is beneficial to target CXp enumeration and get AXp's by duality.
An explanation offered for this by~\cite{yis-corr23} is that MARCO has to collect a large number of dual explanations before the minimal hitting sets it gets may realistically be the target explanations.

Our practical observations confirm the above.
Also note that the AXp's enumerated by MARCO need to be \emph{diverse} if we want to quickly get good FFA approximations.
Due to the \emph{incremental} operation of the minimal hitting set enumeration algorithms, this is hard to achieve if we \emph{target} AXp enumeration.
But if we aim for CXp's then we can extract diverse AXp's by duality, which helps us get reasonable FFA approximations quickly converging to the exact FFA values.

Nevertheless, our experiments with the setup of~\cite{yis-corr23} suggest that AXp enumeration in fact tends to finish much earlier than CXp enumeration despite ``losing'' at the beginning.
This makes one wonder what to opt for if good and quickly converging FFA approximation is required: AXp enumeration or CXp enumeration. 
On the one hand, the latter quickly gives a large set of diverse AXp's and good initial FFA approximations.
On the other hand, complete AXp enumeration finishes much faster, i.e. exact FFA is better to compute by targeting AXp's.

Motivated by this, we propose to set up MARCO in a way that it starts with CXp enumeration and then seamlessly switches to AXp enumeration using two simple heuristic criteria.
It should be first noted that to make efficient switching in the target explanations, we employ pure SAT-based hitting set enumerator, where an incremental SAT solver is called multiple times aiming for minimal or maximal models~\cite{giunchiglia-ecai06}, depending on the phase.
This allows us to keep all the explanations found so far without ever restarting the hitting set enumerator.

As we observe that AXp's are normally larger than CXp's, both criteria for switching the target build on the use of the average \emph{size} of the last $w$ AXp's and the last $w$ CXp's enumerated in the most recent iterations of the MARCO algorithm.
(Recall that our MARCO setup aims for subset-minimal explanations rather than cardinality-minimal explanations, i.e. neither target nor dual explanations being enumerated are cardinality-minimal.)
This can be seen as inspecting ``sliding windows'' of size $w$ for both AXp's and CXp's.
In particular, assume that the sets of the last $w$ AXp's and CXp's are denoted as $\axps^w$ and $\cxps^w$, respectively.
First, switching can be done as soon as we observe that CXp's on average are \emph{much} smaller than AXp's, i.e. when
\begin{align}\label{eq:cond1}
    \frac{\sum_{\fml{X}\in\axps^w}{|\fml{X}|}}{\sum_{\fml{Y}\in\cxps^w}{|\fml{Y}|}} \geq \alpha,
\end{align}
where $\alpha\in\mbb{R}$ is a predefined numeric parameter.
The rationale for this heuristic is as follows.
Recall that extraction of a subset-minimal dual explanation is done by deciding the validity of the corresponding predicate, either \eqref{eq:axp} or \eqref{eq:cxp}, while iteratively removing features from the candidate feature set (see \Cref{sec:ffa}).
As such, if the vast majority of CXp's is small, their extraction leads to the lion's share of the decision oracle calls being \emph{satisfiable}.
On the contrary, extracting large AXp's as dual explanations leads to most of the oracle calls proving \emph{unsatisfiability}.
Hence, we prefer to deal with cheap satisfiable calls rather than expensive unsatisfiable ones.
Second, we can switch when the average CXp size ``stabilizes''.
Here, let us denote a new CXp being just computed as $\fml{Y}_{\text{new}}$.
Then the second criterion can be examined by checking if the following holds:
\begin{align}\label{eq:cond2}
    \left|\left|\fml{Y}_{\text{new}}\right|-\frac{\sum_{\fml{Y}\in\cxps^w}{|\fml{Y}|}}{w}\right| \leq \varepsilon,
\end{align}
with $\varepsilon\in\mbb{R}$ being another numeric parameter.
This condition is meant to signify the point when the set of AXp's we have already accumulated is diverse enough for all the CXp's to be of roughly equal size, which is crucial for good FFA approximations.
Overall, the switching can be performed when either of the two conditions \eqref{eq:cond1}--\eqref{eq:cond2} is satisfied.


\begin{algorithm}[t]
  \begin{algorithmic}[1]
    \Procedure{AdaptiveXpEnum}{$\kappa$, $\mbf{v}$, $c$, $w$, $\alpha$, $\varepsilon$}
      \State $\left(\exps_0, \exps_1\right)\gets (\emptyset, \emptyset)$ \label{ln:init}
      \Comment{CXp's and AXp's to collect}
      \State $\rho \gets 0$
      \Comment{Target phase of enumerator, initially CXp}
      \While{true}
        \State $\mu\gets\Call{MinimalHS}{\exps_{1-\rho}, \exps_\rho, \rho}$ \label{ln:mhs}
        \IfThen{$\mu=\bot$}{\Break} \label{ln:nocand}
        \If{$\Call{IsTargetXp}{\mu,\kappa,\mbf{v},c}$} \label{ln:check}
          \State $\exps_\rho\gets\exps_\rho\cup\{\mu\}$ \label{ln:targrec}
          \Comment{Collect target expl. $\mu$}
        \Else
          \State $\nu\gets\Call{ExtractDualXp}{\fml{F}\setminus\mu,\kappa,\mbf{v},c}$ \label{ln:extract}
          \State $\exps_{1-\rho}\gets\exps_{1-\rho}\cup\{\nu\}$ \label{ln:dualrec}
          \Comment{Collect dual expl. $\nu$}
        \EndIf
        \If{$\Call{IsSwitchNeeded}{\exps_\rho,\exps_{1-\rho}, w, \alpha, \varepsilon}$} \label{ln:swcond}
            \State $\rho\gets 1-\rho$ \label{ln:switch}
            \Comment{Flip phase of \textsc{MinimalHS}}
        \EndIf
      \EndWhile
      \Return{$\exps_1$, $\exps_0$}
      \Comment{Result AXp's and CXp's}
    \EndProcedure
  \end{algorithmic}
  \caption{Adaptive Explanation Enumeration}
  \label{alg:enum-switch}
\end{algorithm}

\Cref{alg:enum-switch} shows the pseudo-code of the adaptive explanation enumeration algorithm.
Additionally to the classifier's representation $\kappa$, instance $\mbf{v}$ to explain and its class $c$, it receives 3 numeric parameters: window size $w\in\mbb{N}$ and switching-related constants $\alpha,\varepsilon\in\mbb{R}$.
The set of CXp's (resp. AXp's) is represented by $\exps_0$ (resp. $\exps_1$) while the target phase of the hitting set enumerator is denoted by $\rho\in\{0,1\}$, i.e. at each iteration \Cref{alg:enum-switch} aims for $\exps_\rho$ explanations.
As initially $\rho=0$, the algorithm targets CXp enumeration.
Each of its iterations starts by computing a minimal hitting set $\mu$ of the set $\exps_{1-\rho}$ subject to $\exps_\rho$ (see \cref{ln:mhs}), i.e. we want $\mu$ to be a hitting set of $\exps_{1-\rho}$ different from all the target explanations in $\exps_\rho$ found so far.
If no hitting set exists, the process stops as we have enumerated all target explanations.
Otherwise, each new $\mu$ is checked for being a target explanation, which is done by invoking a reasoning oracle to decide the validity either of \eqref{eq:axp} if we target AXp's, or of \eqref{eq:cxp} if we target CXp's.
If the test is positive, the algorithm records the new explanation $\mu$ in $\exps_\rho$.
Otherwise, using the standard apparatus of formal explanations, it extracts a subset-minimal dual explanation $\nu$ from the complementary set $\fml{F}\setminus\mu$, which is then recorded in $\exps_{1-\rho}$.
Each iteration is additionally augmented with a check whether we should switch to the opposite phase $1-\rho$ of the enumeration.
This is done in \cref{ln:swcond} by testing whether at least one of the conditions \eqref{eq:cond1}--\eqref{eq:cond2} is satisfied.

\paragraph{Remark.}
Flipping enumeration phase $\rho$ can be seamlessly done because we apply pure SAT-based hitting enumeration~\cite{giunchiglia-ecai06} where both $\exps_\rho$ and $\exps_{1-\rho}$ are represented as sets of \emph{negative} and \emph{positive} blocking clauses, respectively.
As such, by instructing the SAT solver to opt for minimal or maximal models,\footnote{In SAT solving, a \emph{minimal} model is a satisfying assignment that respects subset-minimality wrt. the set of positive literals, i.e. where none of the 1's can be replaced by a 0 such that the result is still a satisfying assignment~\cite{sat-handbook21}. \emph{Maximal} models can be defined similarly wrt. subset-minimality of negative literals.} we can flip from computing hitting sets of $\exps_{1-\rho}$ subject to $\exps_\rho$ to computing hitting sets of $\exps_\rho$ subject to $\exps_{1-\rho}$, and vice versa.
Importantly, this can be done while incrementally keeping the internal state of the SAT solver, i.e. no learnt information gets lost after the phase switch.
Also, note that although the algorithm allows us to apply phase switching multiple times, our practical implementation switches \emph{once} because AXp enumeration normally gets done much earlier than CXp enumeration, i.e. there is no point in switching back.

\section{Experimental Results} \label{sec:res}
We evaluate our approach to FFA approximation for 
gradient boosted trees~(BTs) on various data
using a range of metrics.

\subsection{Experimental Setup}\label{sec:setup}
The experiments were done on an Intel Xeon 8260 CPU running Ubuntu 20.04.2 LTS, with the 8GByte memory 
limit. 

\paragraph{Prototype Implementation.}
The proposed approach was prototyped as a set 
of Python scripts, building on the approach of~\cite{yis-corr23}. 
%
The implementation can be found in the supplementary materials.
The proposed approach is referred to as \mswitch, where
the MARCO algorithm switches from CXp to AXp enumeration
based on the conditions~\eqref{eq:cond1}--\eqref{eq:cond2}.
For this, ``sliding windows" of size~$w = 50$ are used; $\varepsilon=2$ in~\eqref{eq:cond1} to signify the extent by which the size 
of AXp's should be larger than the size of CXp's, while
$\alpha=1$ in~\eqref{eq:cond2} denoting the
stability of the average CXp size.

\paragraph{Datasets and Machine Learning Models.}
The experiments include five well-known image and text datasets.
We use the widely used \emph{MNIST} ~\cite{deng2012mnist,pytorch-neurips19}
dataset, which features hand-written digits 
from 0 to 9, with two concrete binary
classification problems created: 1~vs.~3 and 1~vs.~7.
Also, we consider the image dataset \emph{PneumoniaMNIST}~\cite{medmnistv2} differentiating normal X-ray cases from pneumonia.
Since extracting \emph{exact} FFA values for aforementioned image datasets turns out
to be hard~\cite{yis-corr23}, we perform a size reduction,
downscaling these images from $\text{28} \times \text{28} \times \text{1}$ to $\text{10} \times
\text{10} \times \text{1}$.
Additionally, 2 text datasets are considered in the experiments: \emph{Sarcasm}~\cite{misra2023Sarcasm,misra2021sculpting} and
\emph{Disaster}~\cite{hdcg-kaggle19}.
The \emph{Sarcasm} dataset contains news headlines collected 
from websites, along with binary labels indicating whether 
each headline is sarcastic or non-sarcastic.
The \emph{Disaster} dataset consists of the contents 
of tweets with labels about whether a user 
announces a real disaster or not.
The 5 considered datasets are randomly divided into 80\% training
and 20\% test data.
To evaluate the performance of the proposed approach, 15 test instances in each test set are randomly selected. 
Therefore, the total number of instances used in the experiments is 75.
In the experiments, gradient boosted trees~(BTs) are trained by
XGBoost~\cite{guestrin-kdd16a}, where each BT consists of 25 to 40 trees of
depth 3 to 5 per class.\footnote{Test accuracy for \emph{MNIST}, \emph{PneumoniaMNIST}, \emph{Sarcasm}, and \emph{Disaster} datasets is 0.99, 0.83, 0.69, and 0.74, respectively.}
%

\paragraph{Competitors and Metrics.}
We compare the proposed approach~(\mswitch) against the original MARCO algorithms targeting AXp~(\maxp) or CXp~(\mcxp) enumeration.
We evaluate the FFA generated by these 
approaches by comparing it to the 
\emph{exact} FFA through a variety of metrics,
including errors, Kendall's Tau~\cite{kendall1938}, rank-biased overlap~(RBO)~\cite{wmz-tois10}, 
and Kullback–Leibler~(KL) divergence~\cite{kl-ams51}.
The \emph{error} is quantified using Manhattan distance,
i.e. the sum of absolute differences across 
all features in an instance.
The comparison of feature ranking is assessed
by Kendall's Tau and RBO,
while feature distributions are compared by 
KL divergence.\footnote{Kendall's Tau is a correlation coefficient
metric that evaluates the ordinal association
between two ranked lists,
providing a similarity measure for the order of values,
while RBO quantifies similarity between two ranked lists,
considering both the order and depth of the overlap. KL-divergence measures 
the statistical distance between two
probability distributions.}
Kendall's Tau and RBO produces values within
the range of $[-\text{1}, \text{1}]$ and
$[\text{0}, \text{1}]$, respectively, where higher values
in both metrics indicate stronger agreement or similarity 
between the approximate FFA and the exact FFA.
KL-divergence ranges from  0 to $\infty$, with
the value approaching 0 reflecting  better alignment
between approximate FFA distribution and 
the exact FFA distribution.
Note that if a feature in the exact FFA distribution holds a non-zero probability but is assigned a zero probability in the approximate one, the KL value becomes $\infty$.
Finally, we also compare the efficiency of generating
AXp's in the aforementioned approaches.

\begin{figure}[!t]
	\centering
    \includegraphics[width=0.45\textwidth]{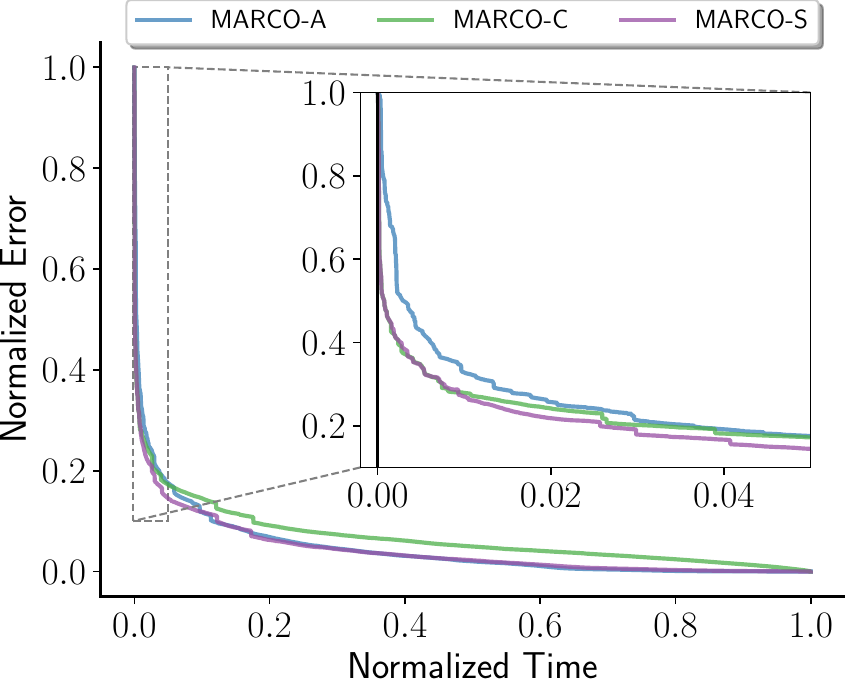}
    \caption{FFA error over time.} 
    \label{fig:error}
\end{figure}

\begin{figure}[!t]
	\centering
    \includegraphics[width=0.45\textwidth]{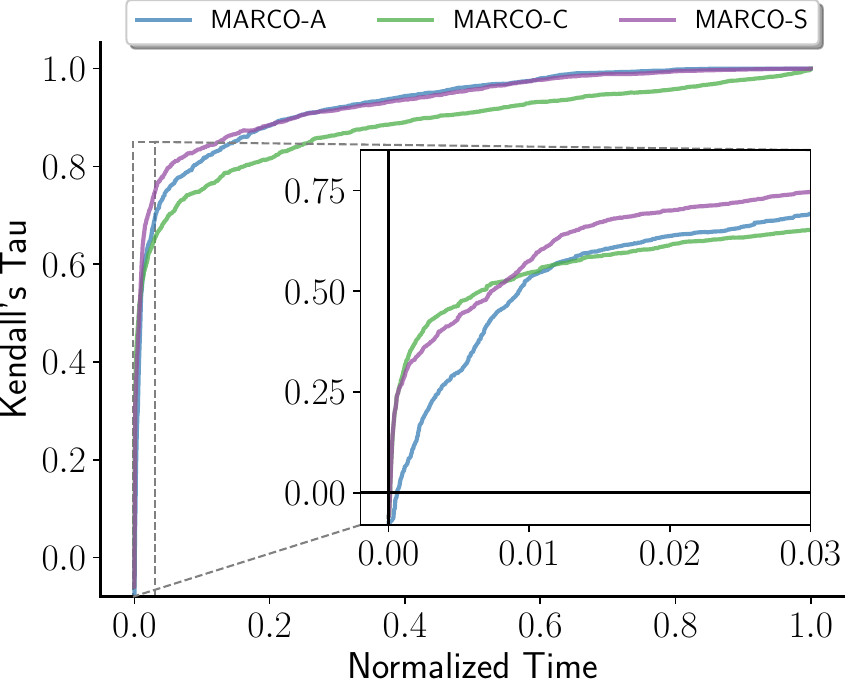}
    \caption{Kendall's Tau over time.}
    \label{fig:tau}
     \end{figure}
     
\begin{figure}[!t]
	\centering
    \includegraphics[width=0.45\textwidth]{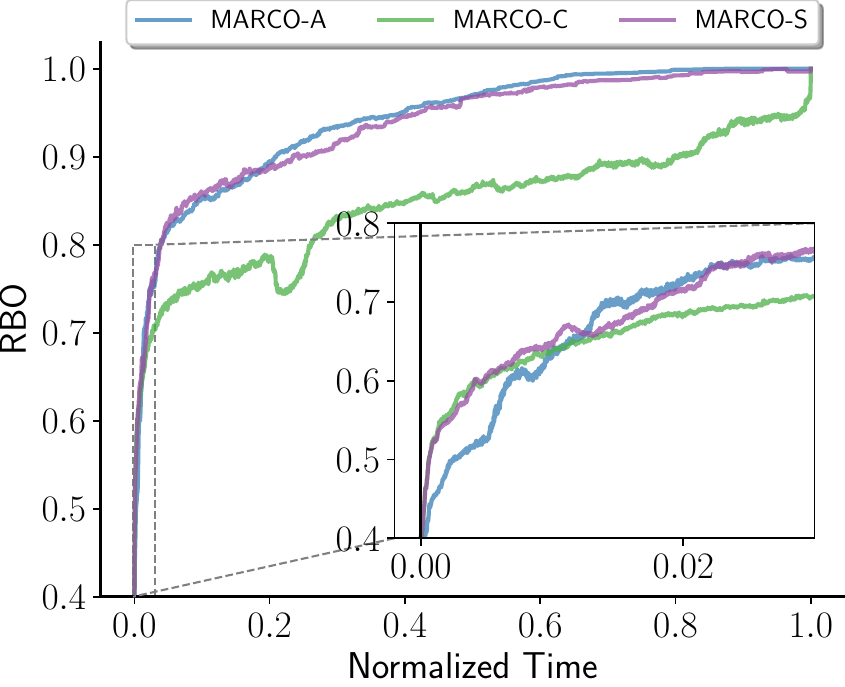}
    \caption{RBO over time.}
    \label{fig:rbo}
\end{figure}

\begin{figure}[!t]
	\centering
    \includegraphics[width=0.45\textwidth]{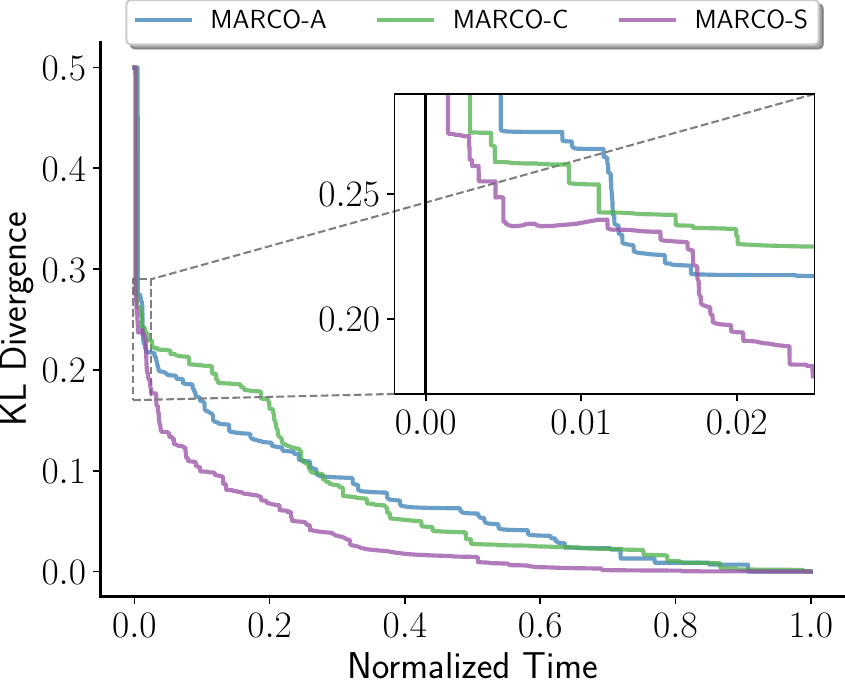}
    \caption{KL divergence over time.}
    \label{fig:kl}
\end{figure}

\begin{figure}[!t]
	\centering
    \includegraphics[width=0.45\textwidth]{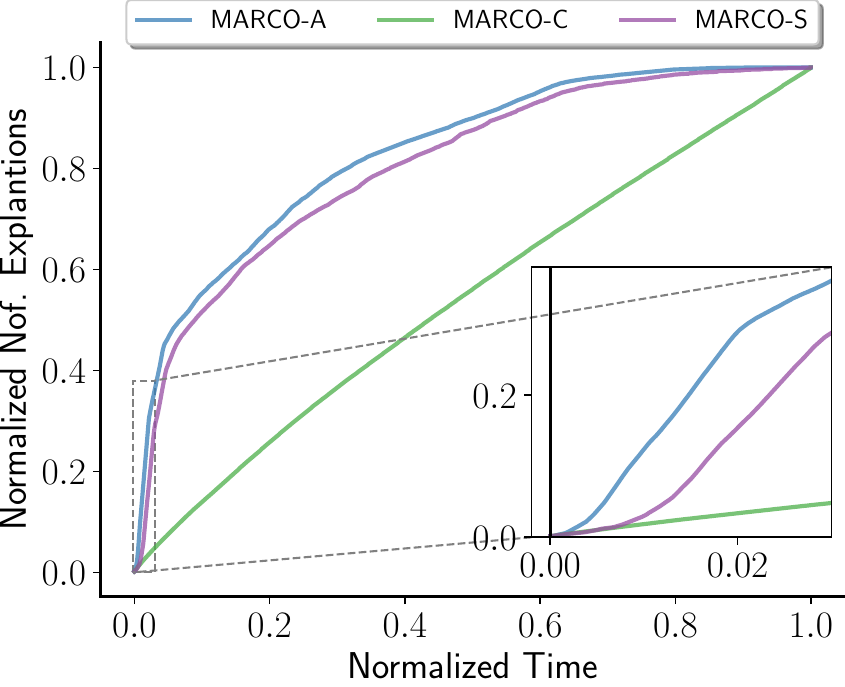}
    \caption{Number of explanations over time.}
    \label{fig:expls}
\end{figure}

\subsection{Integrated Results} \label{sec:resint}
This section compares the proposed approach against 
the original MARCO algorithms for both AXp enumeration
and CXp enumeration within the examined datasets.
Figures~\ref{fig:error} to \ref{fig:expls} 
illustrate the results of approximate 
FFA in terms of the five aforementioned metrics, namely,
errors, Kendall's Tau, RBO, KL divergence, and 
the number of AXp's.
These results are obtained by averaging 
values across all instances.
Note that since KL-divergence is $\infty$ when
there exists a feature in the exact FFA distribution
that holds a non-zero probability but is assigned a zero
probability in the approximate one, to address this issue
we assign 0.5 as the KL-divergence value instead of $\infty$
in this case.\footnote{According the experimental results
we obtained, the maximum of non-infinity KL-divergence 
values is not greater than 0.5.}
The average runtime to extract exact FFA is 
3255.30s~(from 2.15s to 29191.42s),
19311.87s~(from 9.39s to 55951.57s), 
and 3509.50s~(from 9.26s to 30881.55s)
for \maxp, \mcxp, and \mswitch, respectively.
Since the runtime required to get exact FFA 
varies, we normalized the runtime in each instance into [0, 1],
where the longest time across three compared approaches 
in each instance is normalized to 1.
%
Furthermore, we normalized the number of AXp's in each instance
into the interval of [0, 1].
Finally, errors are also normalized into [0, 1] for each instance.
Detailed experimental results can be found in the supplementary material.

\paragraph{Errors.}
Figure~\ref{fig:error} displays the average errors 
of approximate FFA across all instances over time. 
In the early period, \mcxp obtained more accurate 
approximate FFA in terms of errors compared with \maxp, 
while beyond the 0.04 time fraction, the latter  
surpasses the former and eventually achieves 0 error
faster, which also indicates that \maxp requires less
time to acquire the exact FFA.
Motivated by the above observation, the proposed approach
aims at replicating the ``best of two worlds'' during the FFA approximation process.
Observe that \mswitch commences with
the MARCO algorithm targeting CXp's
and so replicates the superior behavior of \mcxp 
during the initial stage.
Over time, \mswitch triggers a switch criterion 
and transitions to targeting AXp's, thus
emulating the behavior of the better competitor 
beyond the early stage, i.e. \maxp.
Finally, \mswitch acquires FFA with 0 error~(i.e. 
exact FFA) as efficiently as \maxp.

\paragraph{Feature Ranking.} 
The results of Kendall's Tau and RBO are depicted
in Figures~\ref{fig:tau}--\ref{fig:rbo}.
Initially, \mcxp outperforms \maxp in
terms of both feature ranking metrics.
As time progresses, \maxp starts to surpass \mcxp
since 0.01 time fraction until the point of acquiring 
the exact FFA.
Figures~\ref{fig:tau}--\ref{fig:rbo} demonstrate that initially
\mswitch manages to keep close to the better performing
\mcxp.
When \maxp starts dominating, \mswitch switches 
the target phase from CXp's to AXp's, 
replicating the superior performance displayed by \maxp.

\paragraph{Distribution.}
\Cref{fig:kl} depicts the average results of
KL divergence over time.
Similar to feature ranking, \mcxp is initially capable
of computing an FFA distribution closer to 
the exact FFA distribution.
Beyond the initial stage, \maxp exhibits the ability
to generate closer FFA distribution.
Once again, \mswitch replicates the superior behavior between
\maxp and \mcxp in most of time.
During the initial stage, \mswitch reproduces the 
behavior of \mcxp, and switch to target
AXp's directly when the switch criterion is met.
Surprisingly, \mswitch \emph{outperforms both competitors} 
throughout the entire time interval.

\paragraph{Number of AXp's.}
The average results of the normalized number of AXp's 
are illustrated in Figure~\ref{fig:expls}.
\maxp generates AXp's faster and finishes earlier than \mcxp.
%
Observe that the proposed
approach~\mswitch is able to avoid the inferior performance 
between \maxp and \mcxp throughout the process.
Initially, \mswitch replicates the behavior of \mcxp
and then switches to target AXp's 
to replicate the performance of \maxp.

\paragraph{Summary.}
\mswitch can replicate the behavior of the superior competitor for most of the computation duration, leading to efficient and good approximation of FFA.
As illustrated by Figures~\ref{fig:error}--\ref{fig:kl} in terms of FFA errors, Kendall's Tau, RBO, 
and KL divergence, starting from CXp enumeration and switching to AXp enumeration based on the criteria \eqref{eq:cond1}--\eqref{eq:cond2} successfully replicates the behavior of the winning configuration of MARCO, thus getting close of their \emph{virtual best solver}.
%
Although in terms of the number of AXp's shown in \Cref{fig:expls} \maxp consistently outperforms \mcxp, those AXp's are not diverse enough to allow \maxp to beat \mcxp in other relevant metrics.
This is alleviated by \mswitch, which manages to get enough diverse AXp's initially and then switches to  target AXp's to catch up with the performance of \maxp.
\section{Conclusions} \label{sec:conc}

Formal feature attribution (FFA) defines a crisp and easily understood notion of feature importance to a decision.
Unfortunately for many classifiers and datasets it is challenging to compute exactly. As our paper demonstrates, it remains hard even if the set of CXp's is provided. Hence there is a need for \emph{anytime} approaches to compute FFA. Surprisingly, using CXp enumeration to generate AXp's leads to fast good approximations of FFA, but in the longer term it is worse than simply enumerating AXp's. This paper shows how to combine the approaches by diligently switching the phase of enumeration, without losing information computed in the underlying MARCO enumeration algorithm. 
This gives a highly practical approach to computing FFA.

The proposed mechanism can be adapted to a multitude of other problems, e.g. in the domain of over-constrained systems, where one wants to collect a \emph{diverse} set of minimal unsatisfiable subsets (MUSes) as the same minimal hitting set duality exists between MUSes and
minimal correction sets (MCSes)~\cite{birnbaum-jetai03,reiter-aij87}.

\bibliography{refs}

\newpage
\clearpage
\appendix
\appendixpage

\section{Detailed Experimental Results}

This appendix compares the proposed approach~(\mswitch)
against the original MARCO algorithms for targeting AXp's~(\maxp)
and CXp's~(\mcxp) in each considered dataset, 
namely \emph{MNIST-1vs3}, \emph{MNIST-1vs7}, \emph{PneumoniaMNIST}, \emph{Sarcasm}, and \emph{Disaster}.
Figures~\ref{fig:error-dt} to \ref{fig:expls-dt} depict 
the average results of the comparison between the approximate FFA
and the exact FFA using 5 aforementioned metrics, 
namely, errors, Kendall's Tau, RBO, KL divergence, and
the number of AXp's.
The results are acquired by averaging the values across 
15 selected instances in a dataset.
The average runtime of the three approaches to 
acquire the exact FFA in each datadset is
summarized in Table~\ref{tab:rtime-dt}.
Once again, as KL-divergence is $\infty$ when a feature 
in the exact FFA distribution holds a non-zero probability 
but is assigned a zero probability in the approximate
one, we assign a value of 0.5 to KL-divergence
in stead of $\infty$.
Note that the maximum of non-infinity KL-divergence
values is not greater than 0.5 in the experimental results we
acquired.
The same normalization technique is applied to
runtime, errors, and the number of AXp's as described in Section~\ref{sec:resint}.
We normalized these three metrics across the three 
compared approaches in each instance into [0, 1]. 

\paragraph{Errors.}
The average errors of approximate FFA across selected instances
in each dataset over time are shown in Figure~\ref{fig:error-dt}.
In the initial stage, compared with \maxp, 
\mcxp achieves a more accurate approximation of
FFA in terms of errors in \emph{MNIST-1vs3}, \emph{MNIST-1vs7},
\emph{PneumoniaMNIST}, and \emph{Disaster} datasets
while beyond the early phase \maxp starts
to outperform \mcxp and eventually achieves 
0 error faster in these datasets, indicating that \maxp is able to
spend less time to obtain the exact FFA.
However, \maxp consistently outperforms \mcxp in
the \emph{Sarcasm} dataset.
Inspired by these observations, the proposed approach
targets replicating the ``best of two world" in the process of approximating FFA.
Observe that \mswitch starts with the MARCO algorithm
of CXp enumeration and thus emulates the better behavior
displayed by \mcxp during the initial phase, as shown in Figures~\ref{fig:error-mnist1v3},
\ref{fig:error-mnist1v7}, \ref{fig:error-pneumonia}, and \ref{fig:error-disaster}.
Over time, \mswitch meets a switch criterion and transitions
to targeting AXp enumeration, so replicating the
behavior of the superior competitor beyond the initial stage.
As for the \emph{Sarcasm} dataset in Figure~\ref{fig:error-sarcasm},
\mswitch replicates the behavior of \mcxp and then 
switches to target AXp’s to replicate the performance of \maxp,
avoiding the inferior performance between \maxp and \mcxp.
Finally, in all datasets \mswitch is able to obtain FFA with 0 error~(i.e.
exact FFA) as efficient as \maxp.

\paragraph{Feature Ranking.}
Figures~\ref{fig:tau-dt} and \ref{fig:rbo-dt} illustrate
the results of Kendall’s Tau and RBO in each dataset.
Observe that \mcxp exhibits better performance than 
\maxp during the initial stage for both feature ranking metrics
in \emph{MNIST-1vs3}, \emph{MNIST-1vs7}, \emph{PneumoniaMNIST},
and \emph{Disaster} datasets.
Over time, \maxp gradually overtakes \mcxp until reaching
the point of obtaining the exact FFA in these datasets.
Nevertheless, in the \emph{Sarcasm} dataset, \maxp
consistently displays the superior performance 
during the entire period.
These figures demonstrate that \mswitch 
is capable of maintaining close to the superior performance
exhibited by \mcxp during the initial phase in all datasets 
except for \emph{Sarcasm}.
When \maxp begins to outperform \mcxp, \mswitch
transitions the target phase from CXp’s to AXp’s,
replicating the superior performance demonstrated by \maxp.
In the \emph{Sarcasm} dataset, switching from CXp enumeration
to AXp enumeration beyond the initial stage avoids 
reproducing the inferior performance between \maxp and \mcxp
in most of time.

\paragraph{Distribution.}
The average results of KL divergence over time
are depicted in Figure~\ref{fig:kl-dt}.
\mcxp is initially capable
of generating an FFA distribution more similar to 
the exact FFA distribution in 
\emph{MNIST-1vs3} and \emph{MNIST-1vs7} datasets.
Afterwards, \maxp exhibits the ability to compute FFA distribution
more similar to the exact FFA attribution.
However, \maxp consistently generate a closer FFA distribution 
than \mcxp in the other datasets.
Once again, \mswitch emulates the superior behavior between 
\maxp and \mcxp in most of time or avoids replicating the inferior
performance for a long time due to the switch mechanism.
\mswitch initially reproduces the behavior of \mcxp, and switches 
to target AXp’s when meeting the switch criterion.
Surprisingly, \mswitch exhibits the best performance among the competitors in most of the entire time interval in \emph{MNIST-1vs3} and \emph{MNIST-1vs7}.

\paragraph{Number of AXp's.}
Figure~\ref{fig:expls-dt} shows the normalized number
of AXp's in each dataset.
Observe that compared with \maxp, \mcxp is capable of generating AXp's
more efficiently during the early stage in \emph{MNIST-1vs3}
and \emph{MNIST-1vs7} datasets, but \maxp starts to outperform
\mcxp as time progresses.
In the other three datasets, \maxp achieves similar or better 
performance in the entire process.
As demonstrated by Figure~\ref{fig:expls-dt}, the proposed
approach~\mswitch is able to avoid the inferior performance 
between \maxp and \mcxp for most of the duration.
Initially, \mswitch emulates the behavior of \mcxp, 
and transitions to target AXp's to replicate the performance 
of \maxp afterwards, preventing the reproduction of
inferior performance.
Remarkably, in the \emph{MNIST-1vs7} dataset, 
\mswitch  emerges as the best-performing approach
for most of time.

\begin{table*}[t!]
\caption{Average runtime(s) in each dataset.}
\label{tab:rtime-dt}
\centering
\begin{tabular}{cccccc}  \toprule
 \textbf{Approach} & \textbf{MNIST-1vs3} & \textbf{MNIST-1vs7} & \textbf{Pneumoniamnist} & \textbf{Sarcasm} & \textbf{Disaster} \\ \midrule
 \textbf{\maxp} & 9350.79 & 2844.15 & 1972.41 & 669.91 & 1439.24 \\ 
 \textbf{\mcxp} & 14787.22 & 7412.40 & 8343.55 & 33391.29 & 32624.89 \\ 
 \textbf{\mswitch} & 9970.55 & 2959.15 & 2016.49 & 975.31 & 1626.01 \\ \bottomrule
\end{tabular}
\end{table*}

\begin{figure*}[!t]
    \centering
   \begin{subfigure}[b]{0.32\textwidth}
    \centering
    \includegraphics[width=\textwidth]{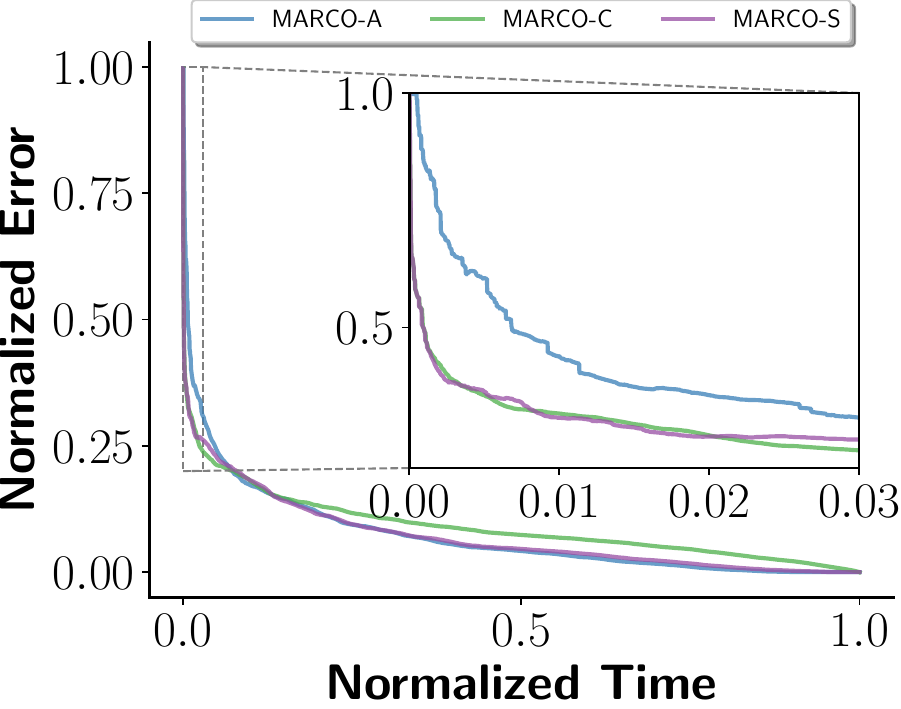}
    \caption{MNIST-1vs3}
    \label{fig:error-mnist1v3}
   \end{subfigure}
   \hfill
   \begin{subfigure}[b]{0.32\textwidth}
     \centering
     \includegraphics[width=\textwidth]{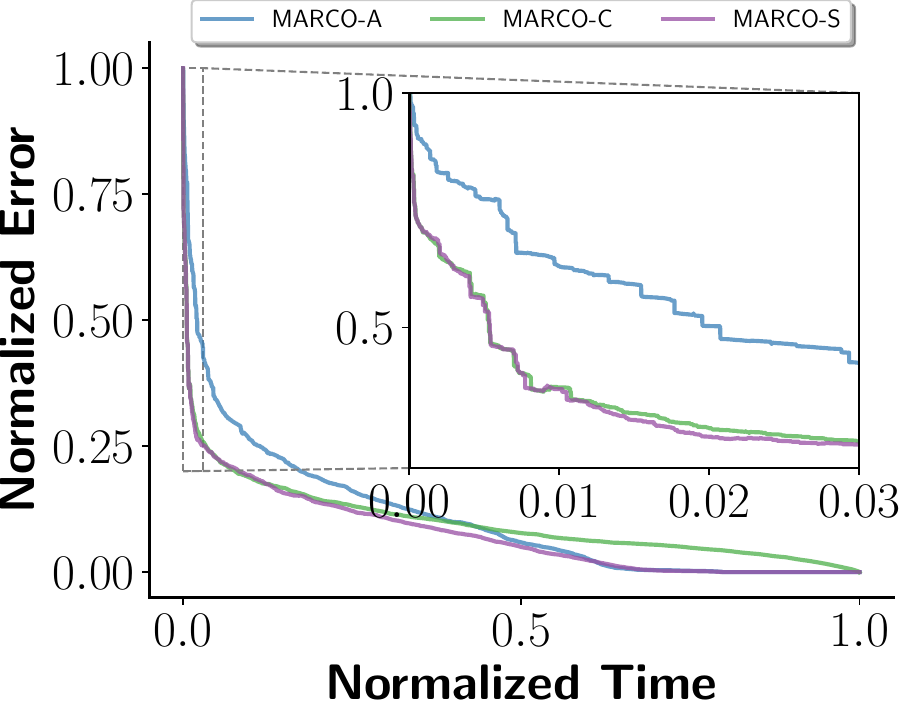}
     \caption{MNIST-1vs7}
    \label{fig:error-mnist1v7}
   \end{subfigure}
   \hfill
   \begin{subfigure}[b]{0.32\textwidth}
    \centering
    \includegraphics[width=\textwidth]{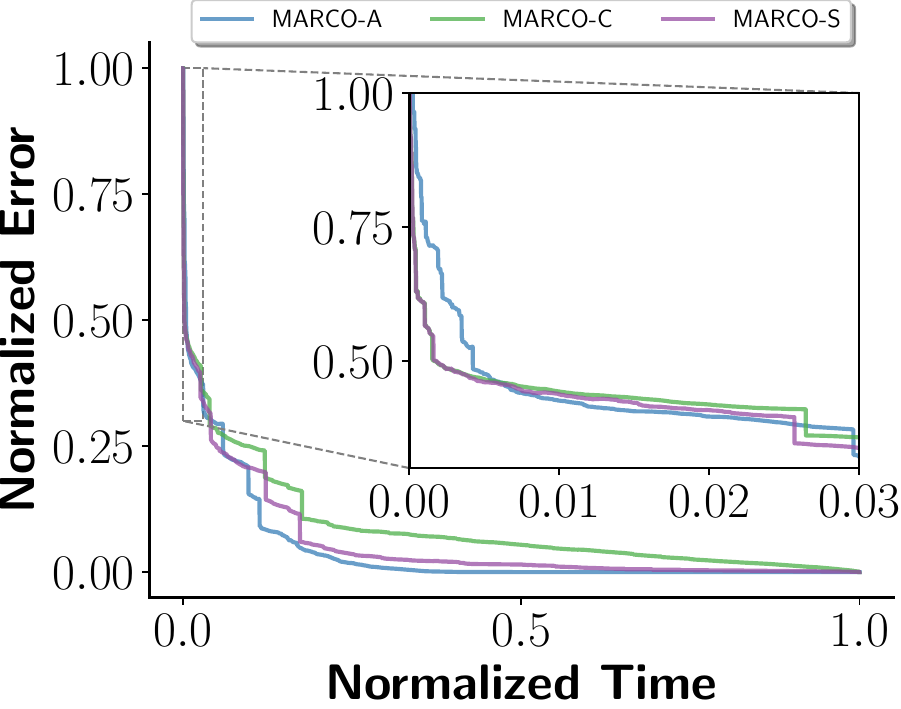}
    \caption{PneumoniaMNIST}
    \label{fig:error-pneumonia}
   \end{subfigure}
   \vspace{1em}
   
   \begin{subfigure}[b]{0.32\textwidth}
    \centering
    \includegraphics[width=\textwidth]{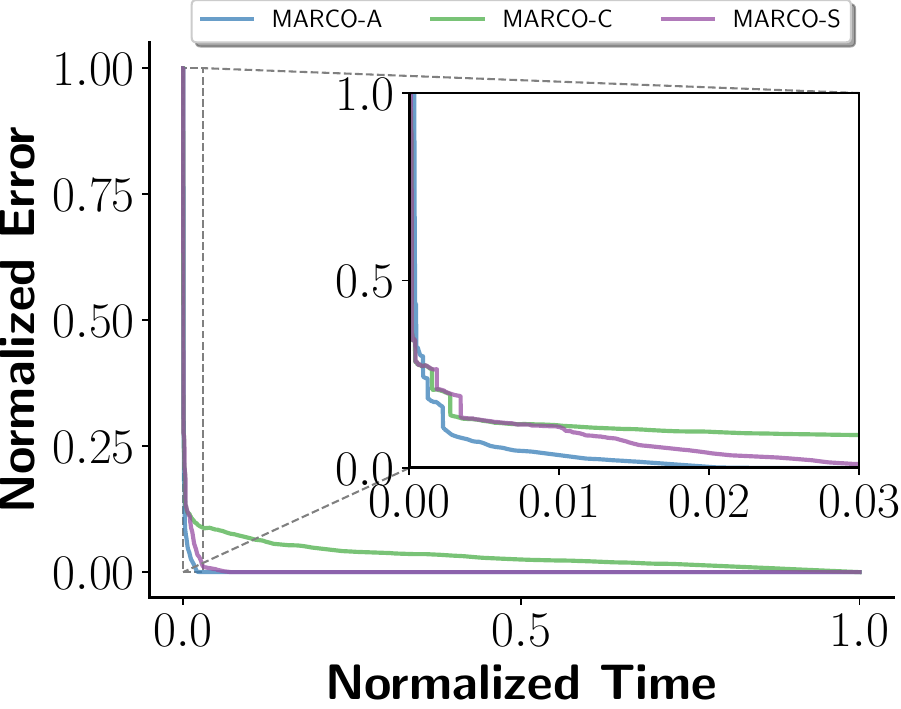}
    \caption{Sarcasm}
     \label{fig:error-sarcasm}
   \end{subfigure}%
   \hspace{1em}
   \begin{subfigure}[b]{0.32\textwidth}
     \centering
     \includegraphics[width=\textwidth]{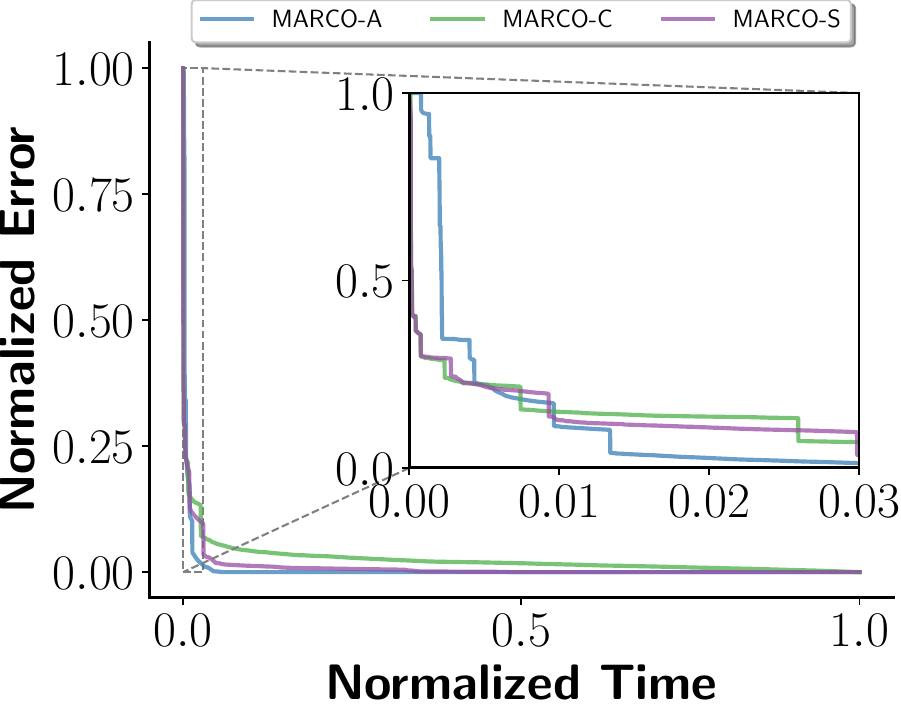}
     \caption{Disaster}
     \label{fig:error-disaster}
   \end{subfigure}
   \caption{FFA error over time in each dataset.}
   \label{fig:error-dt}
\end{figure*}

\paragraph{Summary.}
In alignment with the results presented in Section~\ref{sec:res},
\mswitch is able to replicate the behavior of the superior competitor
between \maxp and \mcxp throughout most of the computation
period, resulting in fast and good approximation of FFA. 
As depicted in Figures~\ref{fig:error-dt} to \ref{fig:kl-dt}, which display the results
of FFA errors, Kendall’s Tau, RBO, and KL divergence, beginning with CXp enumeration and subsequently transitioning to AXp enumeration based on criteria~\ref{eq:cond1}--\ref{eq:cond2} can reproduce the performance of the winning MARCO configuration, and therefore this approach enables the model to closely approach their virtual best solver.
Although \maxp consistently exhibits better performance than \mcxp
in \emph{PneumoniaMNIST}, \emph{Sarcasm} and \emph{Disaster} datasets
in terms of the number of AXp’s depicted in Figure~\ref{fig:expls-dt},
the lack of diversity among these AXp’s prevents \maxp
from outperforming \mcxp in other relevant metrics. 
This diversity issue is mitigated by \mswitch, which effectively addresses 
this by initially obtaining a sufficiently diverse set of AXp’s and subsequently transitioning to targeting AXp’s, thereby matching the performance of \maxp.

\begin{figure*}[!t]
  \centering
   \begin{subfigure}[b]{0.3\textwidth}
   \centering
    \includegraphics[width=\textwidth]{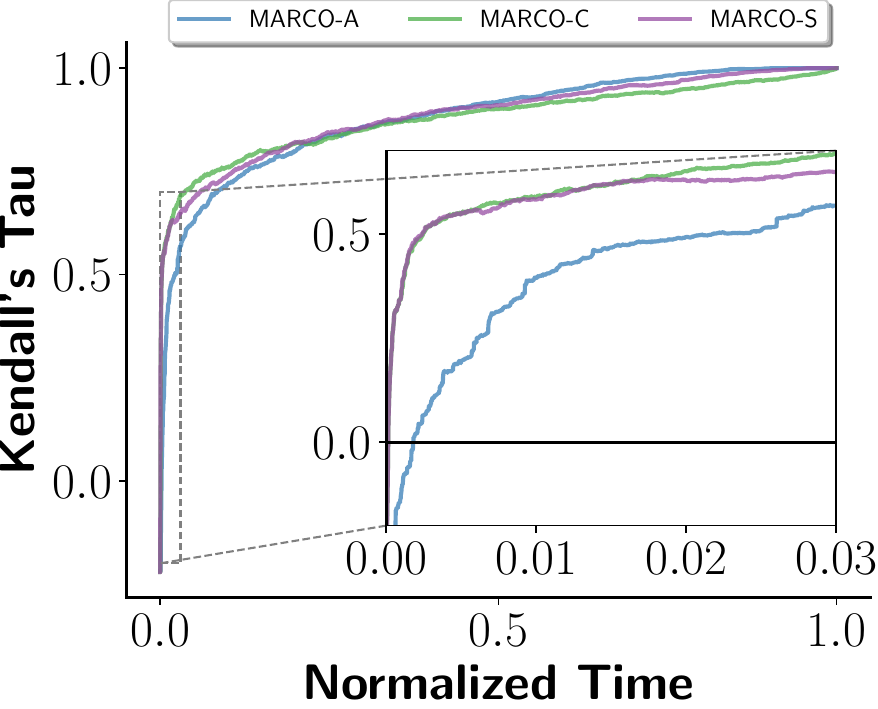}
    \caption{MNIST-1vs3}
    \label{fig:tau-mnist1v3}
   \end{subfigure}%
   \hfill
   \begin{subfigure}[b]{0.3\textwidth}
     \centering
     \includegraphics[width=\textwidth]{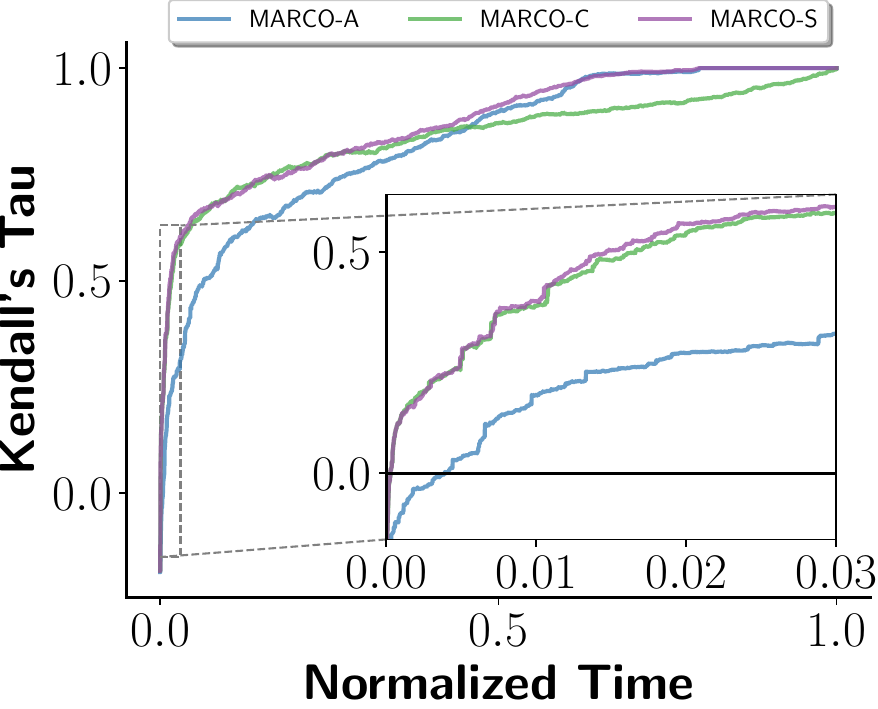}
     \caption{MNIST-1vs7}
     \label{fig:tau-mnist1v7}
   \end{subfigure}
   \hfill
   \begin{subfigure}[b]{0.3\textwidth}
    \centering
    \includegraphics[width=\textwidth]{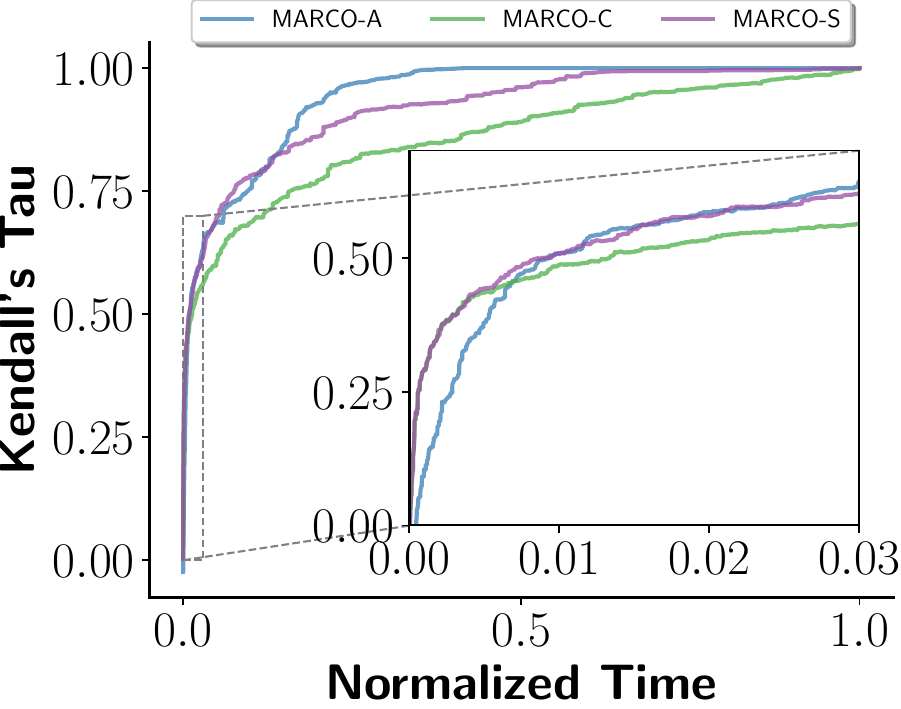}
    \caption{PneumoniaMNIST}
    \label{fig:tau-pneumonia}
   \end{subfigure}
    \vspace{1em}
   
   \begin{subfigure}[b]{0.3\textwidth}
    \centering
    \includegraphics[width=\textwidth]{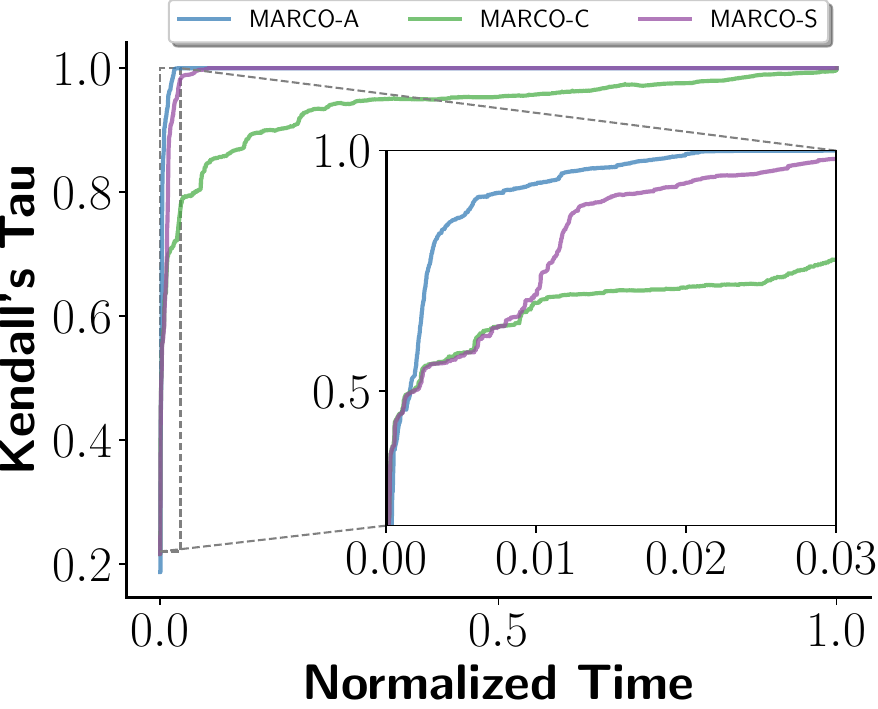}
    \caption{Sarcasm}
    \label{fig:tau-sarcasm}
   \end{subfigure}%
   \hspace{1em}
   \begin{subfigure}[b]{0.3\textwidth}
     \centering
     \includegraphics[width=\textwidth]{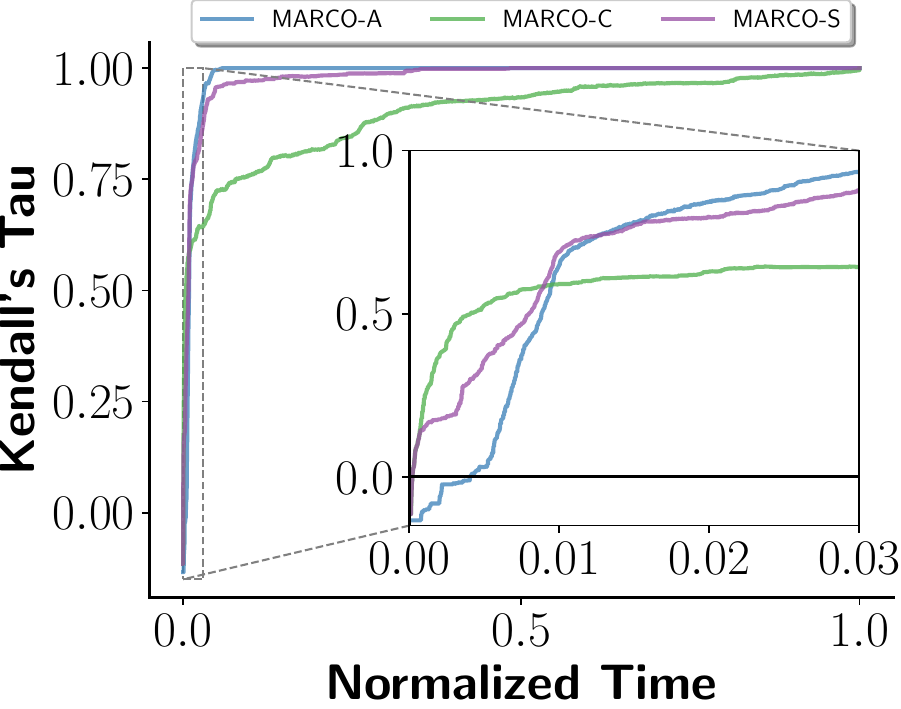}
     \caption{Disaster}
     \label{fig:tau-disaster}
   \end{subfigure}
   \caption{Kendall's Tau over time in each dataset.}
   \label{fig:tau-dt}
\end{figure*}

\begin{figure*}[!t]
  \centering
   \begin{subfigure}[b]{0.3\textwidth}
    \centering
    \includegraphics[width=\textwidth]{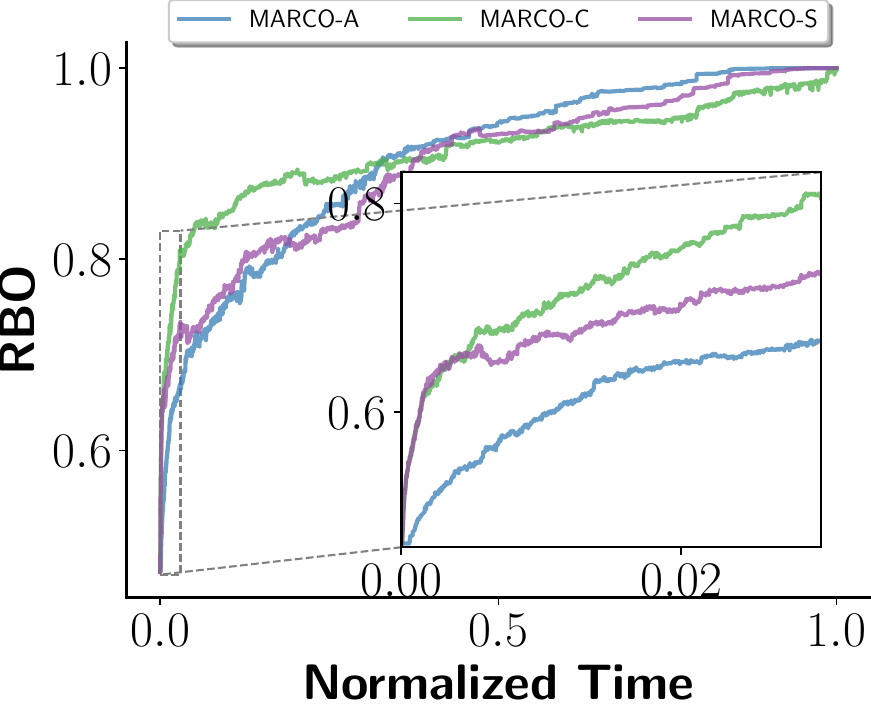}
    \caption{MNIST-1vs3}
    \label{fig:rbo-mnist1v3}
   \end{subfigure}%
   \hfill
   \begin{subfigure}[b]{0.3\textwidth}
     \centering
     \includegraphics[width=\textwidth]{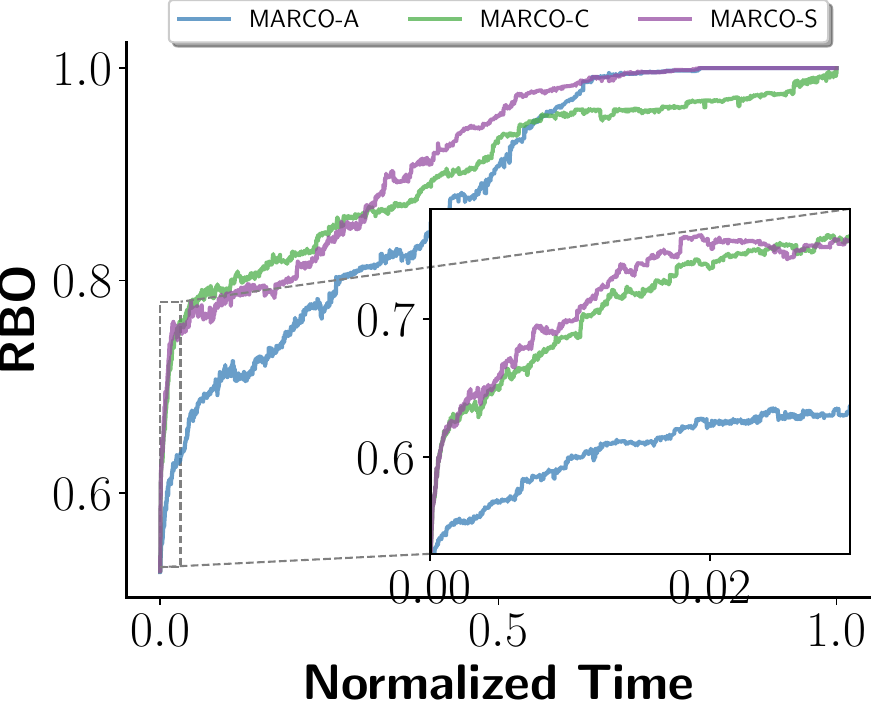}
     \caption{MNIST-1vs7}
     \label{fig:rbo-mnist1v7}
   \end{subfigure}%
   \hfill
   \begin{subfigure}[b]{0.3\textwidth}
    \centering
    \includegraphics[width=\textwidth]{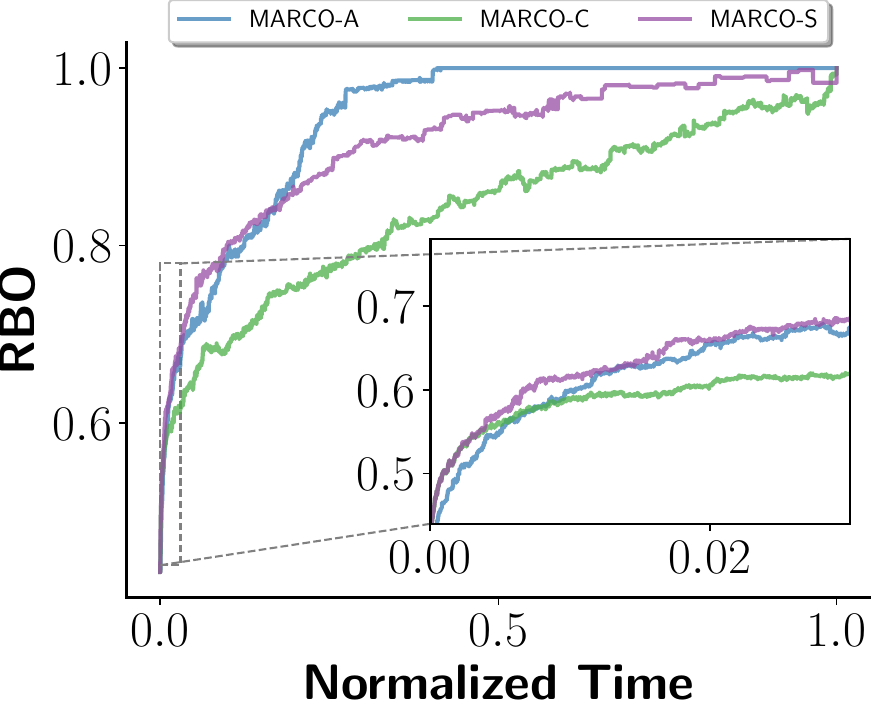}
    \caption{PneumoniaMNIST}
    \label{fig:fig:rbo-pneumonia}
   \end{subfigure}
   \vspace{1em}
   
   \begin{subfigure}[b]{0.3\textwidth}
    \centering
    \includegraphics[width=\textwidth]{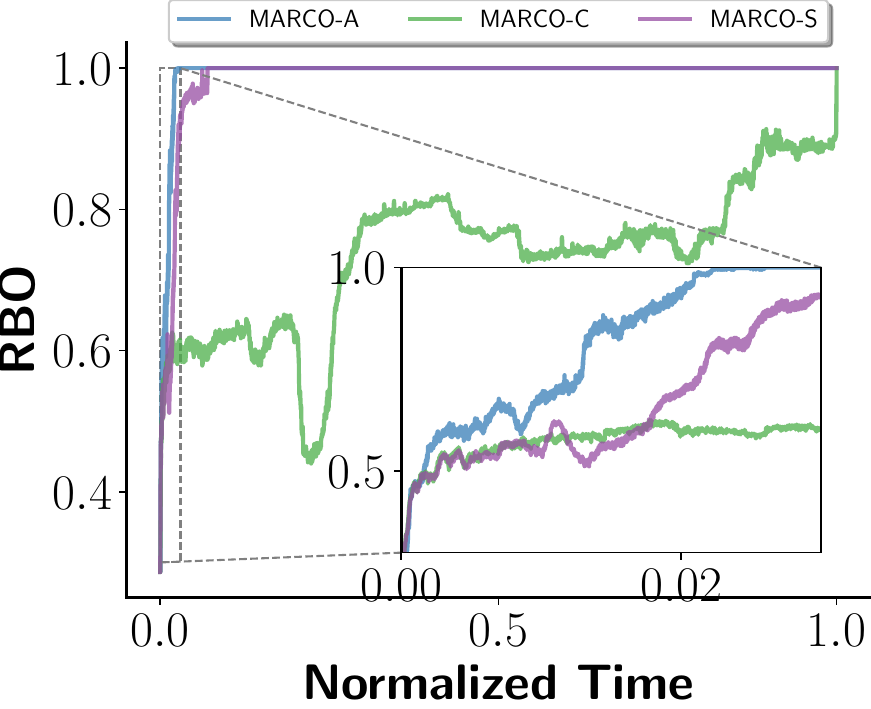}
    \caption{Sarcasm}
    \label{fig:rbo-sarcasm}
   \end{subfigure}%
   \hspace{1em}
   \begin{subfigure}[b]{0.3\textwidth}
     \centering
     \includegraphics[width=\textwidth]{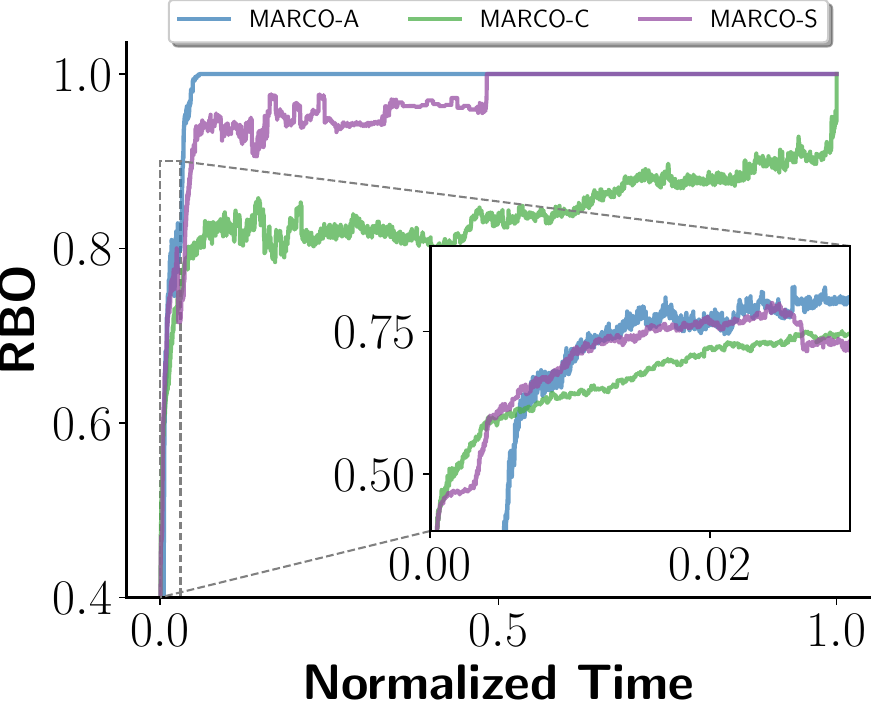}
     \caption{Disaster}
     \label{fig:rbo-disaster}
   \end{subfigure}
   \caption{RBO over time in each dataset.}
   \label{fig:rbo-dt}
\end{figure*}

\begin{figure*}[!t]
  \centering
   \begin{subfigure}[b]{0.3\textwidth}
    \centering
    \includegraphics[width=\textwidth]{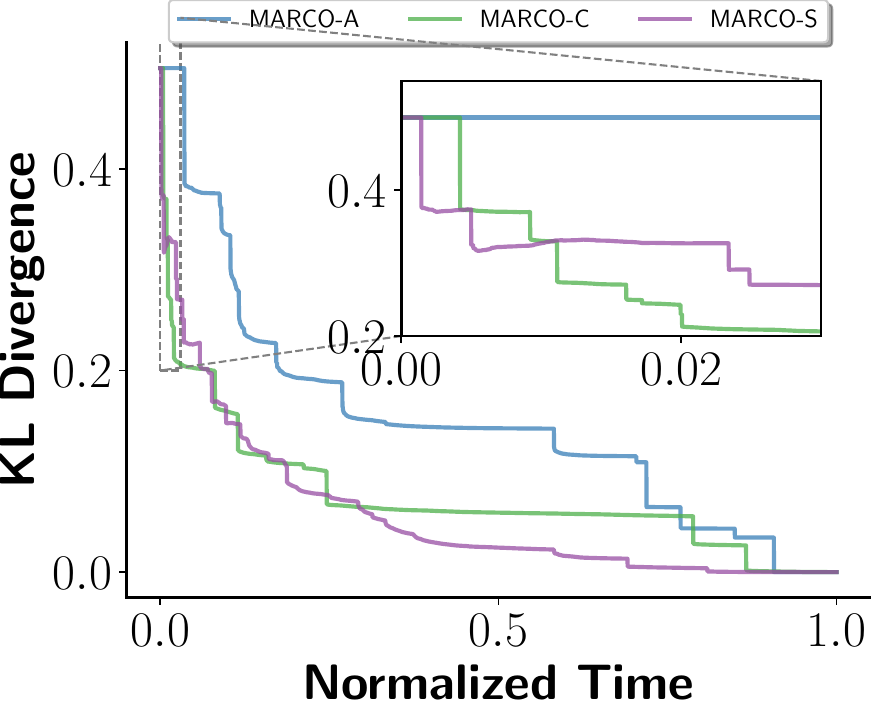}
    \caption{MNIST-1vs3}
    \label{fig:kl-mnist1v3}
   \end{subfigure}%
   \begin{subfigure}[b]{0.3\textwidth}
     \centering
     \includegraphics[width=\textwidth]{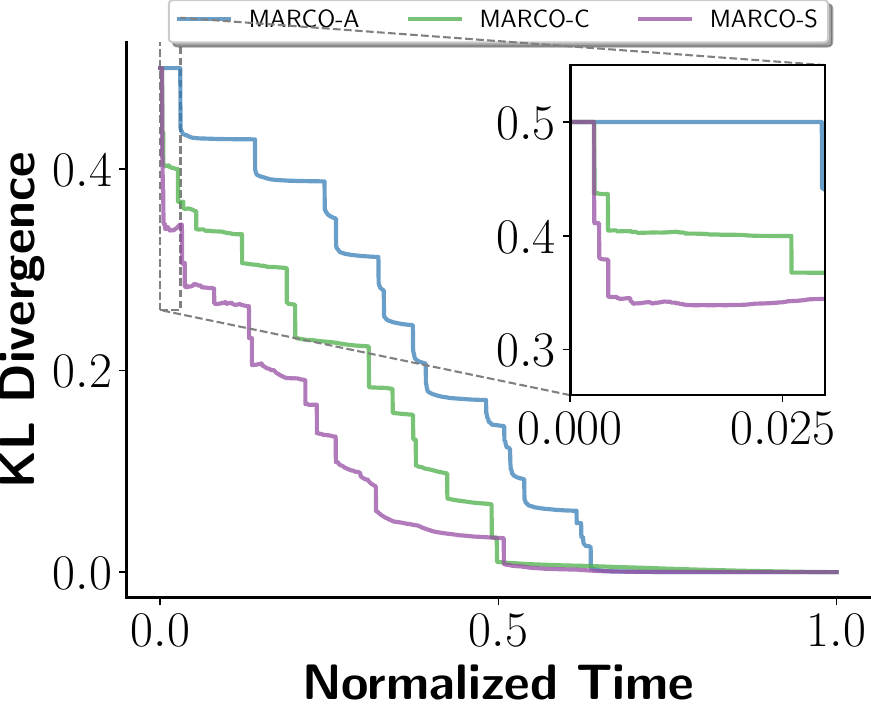}
     \caption{MNIST-1vs7}
    \label{fig:kl-mnist1v7}
   \end{subfigure}
   \begin{subfigure}[b]{0.3\textwidth}
    \centering
    \includegraphics[width=\textwidth]{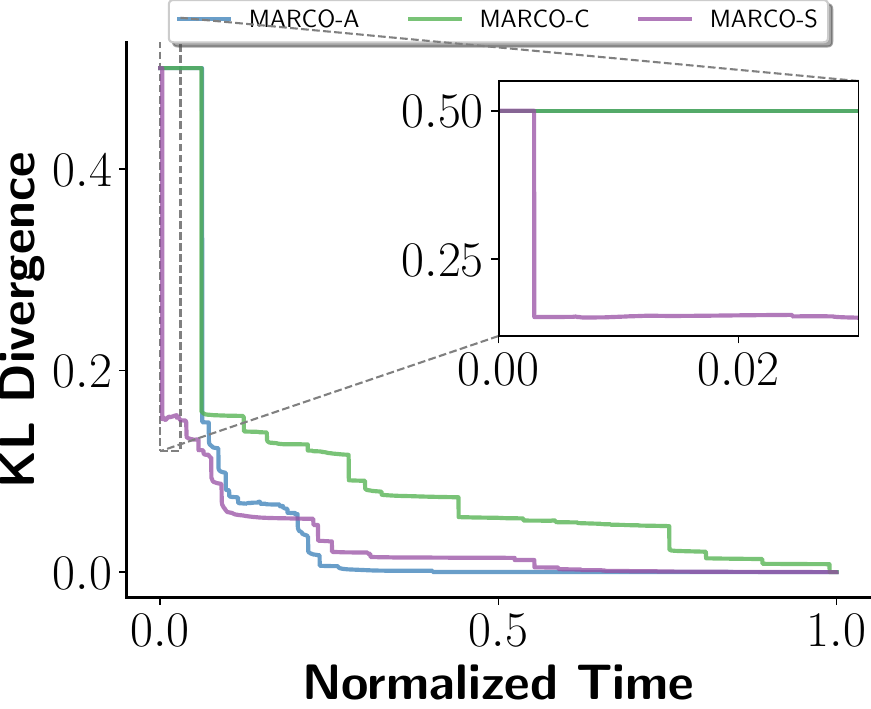}
    \caption{PneumoniaMNIST}
    \label{fig:kl-pneumonia}
   \end{subfigure}%
   \vspace{1em}
   
   \begin{subfigure}[b]{0.3\textwidth}
    \centering
    \includegraphics[width=\textwidth]{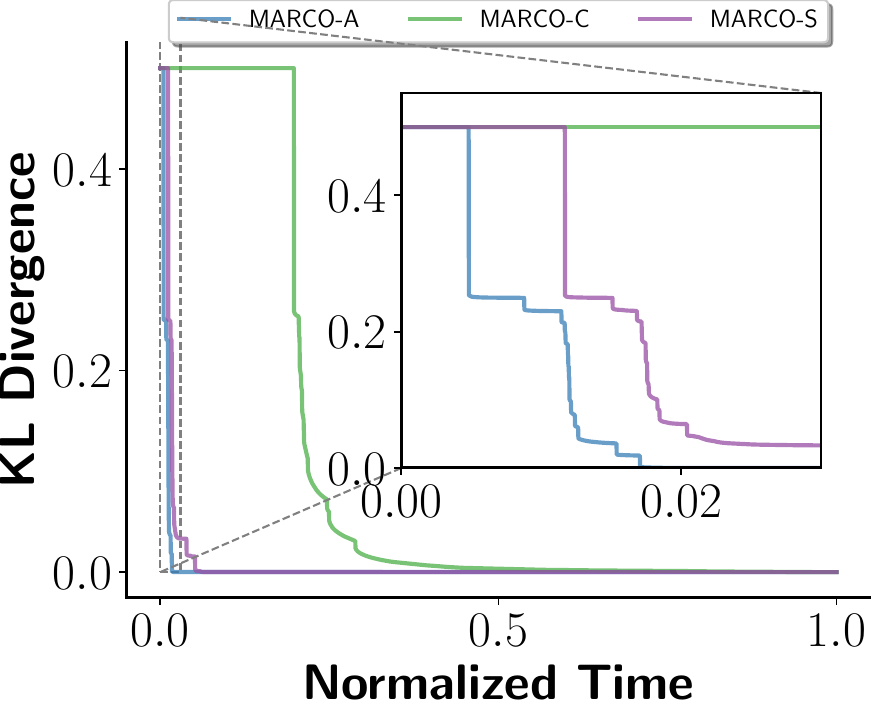}
    \caption{Sarcasm}
    \label{fig:kl-sarcasm}
   \end{subfigure}%
   \hspace{1em}
   \begin{subfigure}[b]{0.3\textwidth}
     \centering
     \includegraphics[width=\textwidth]{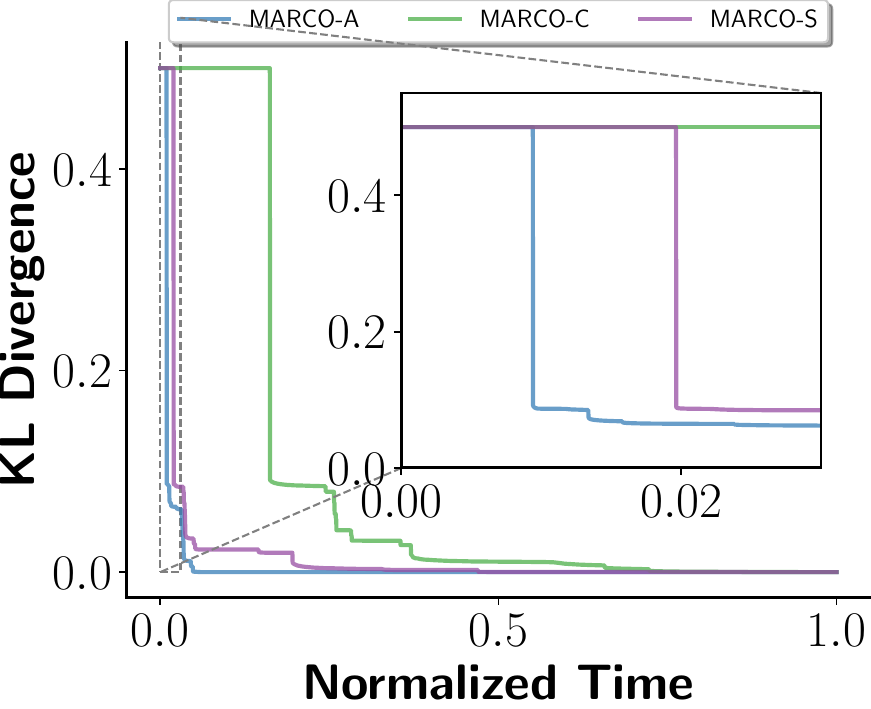}
     \caption{Disaster}
     \label{fig:kl-disaster}
   \end{subfigure}
   \caption{KL-divergence over time in each dataset.}
   \label{fig:kl-dt}
\end{figure*}

\begin{figure*}[!t]
  \centering
   \begin{subfigure}[b]{0.3\textwidth}
    \centering
    \includegraphics[width=\textwidth]{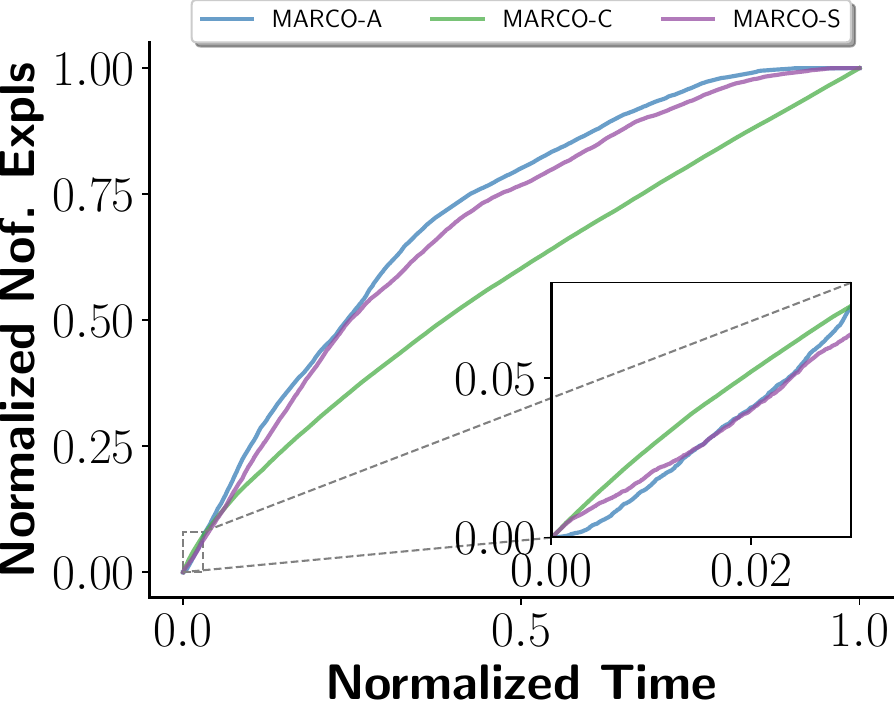}
    \caption{MNIST-1vs3}
     \label{fig:expls-mnist1v3}
   \end{subfigure}%
   \begin{subfigure}[b]{0.3\textwidth}
     \centering
     \includegraphics[width=\textwidth]{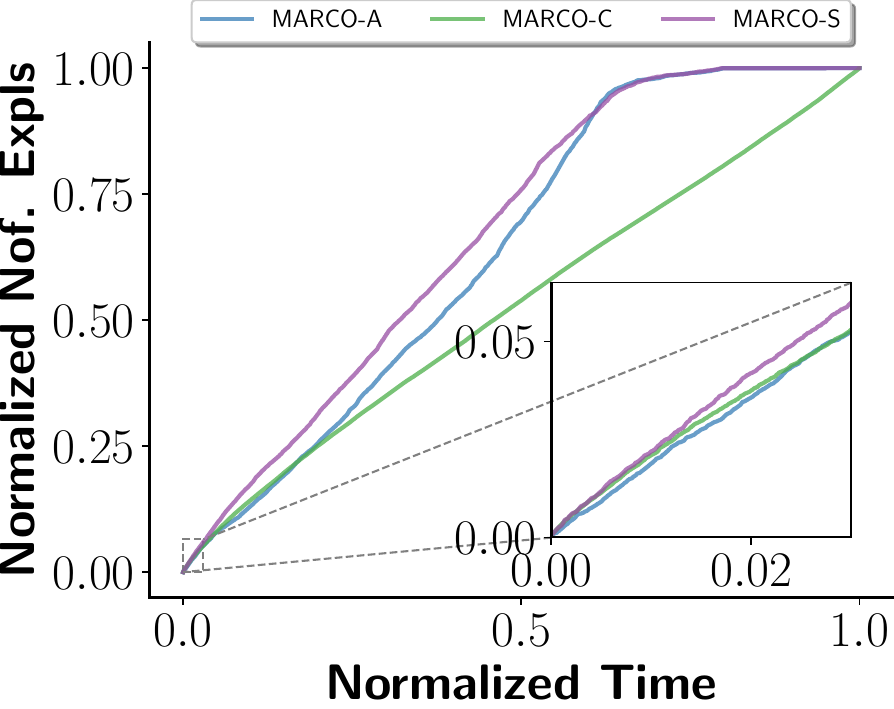}
     \caption{MNIST-1vs7}
     \label{fig:expls-mnist1v7}
   \end{subfigure}
   \begin{subfigure}[b]{0.3\textwidth}
    \centering
    \includegraphics[width=\textwidth]{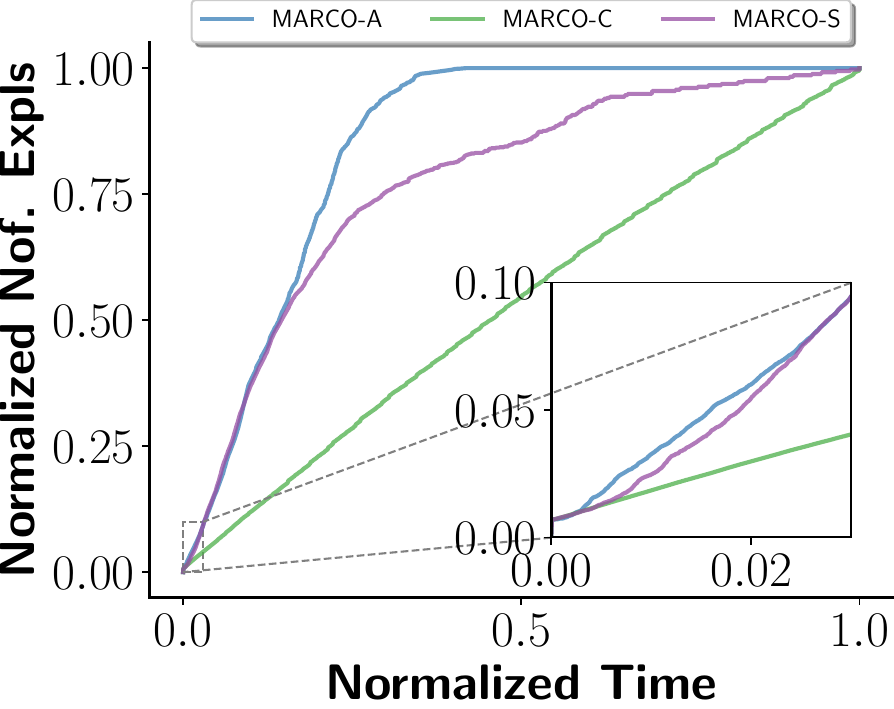}
    \caption{PneumoniaMNIST}
    \label{fig:expls-pneumonia}
   \end{subfigure}
   \vspace{1em}
   
   \begin{subfigure}[b]{0.3\textwidth}
    \centering
    \includegraphics[width=\textwidth]{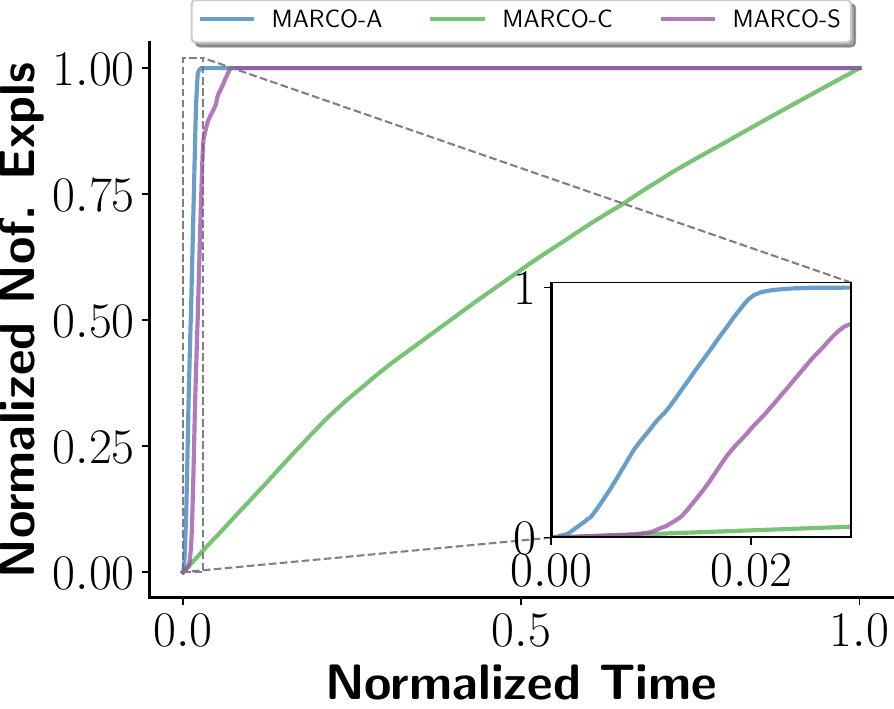}
    \caption{Sarcasm}
    \label{fig:expls-sarcasm}
   \end{subfigure}%
   \hspace{1em}
   \begin{subfigure}[b]{0.3\textwidth}
     \centering
     \includegraphics[width=\textwidth]{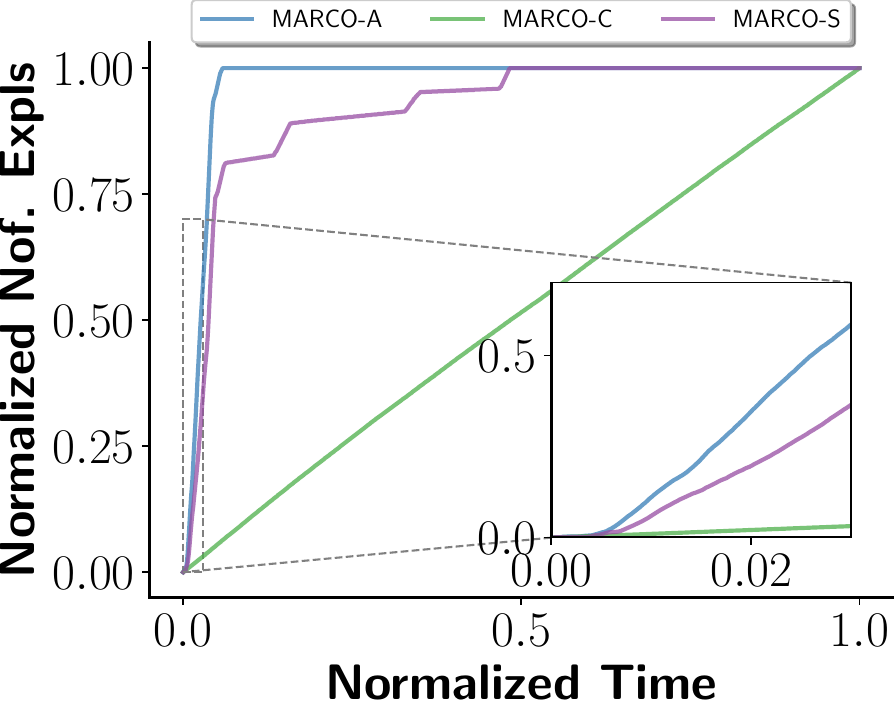}
     \caption{Disaster}
     \label{fig:expls-disaster}
   \end{subfigure}
   \caption{Number of explanations over time in each dataset.}
   \label{fig:expls-dt}
\end{figure*}


\end{document}